\theoremstyle{plain}
\newtheorem{theorem}{Theorem}[section]
\newtheorem{lemma}[theorem]{Lemma}
\theoremstyle{definition}
\theoremstyle{remark}
\icmltitlerunning{Permutation Search of Tensor Network Structures via Local Sampling}
\begin{document}

\twocolumn[
\icmltitle{Permutation Search of Tensor Network Structures\\ via Local Sampling}

% It is OKAY to include author information, even for blind
% submissions: the style file will automatically remove it for you
% unless you've provided the [accepted] option to the icml2022
% package.

% List of affiliations: The first argument should be a (short)
% identifier you will use later to specify author affiliations
% Academic affiliations should list Department, University, City, Region, Country
% Industry affiliations should list Company, City, Region, Country

% You can specify symbols, otherwise they are numbered in order.
% Ideally, you should not use this facility. Affiliations will be numbered
% in order of appearance and this is the preferred way.
\icmlsetsymbol{equal}{*}

\begin{icmlauthorlist}
\icmlauthor{Chao~Li}{equal,riken}
\icmlauthor{Junhua~Zeng}{equal,gut,riken}
\icmlauthor{Zerui~Tao}{tuat,riken}
\icmlauthor{Qibin~Zhao}{riken}
% \icmlauthor{Firstname5 Lastname5}{yyy}
% \icmlauthor{Firstname6 Lastname6}{sch,yyy,comp}
% \icmlauthor{Firstname7 Lastname7}{comp}
%\icmlauthor{}{sch}
% \icmlauthor{Firstname8 Lastname8}{sch}
% \icmlauthor{Firstname8 Lastname8}{yyy,comp}
%\icmlauthor{}{sch}
%\icmlauthor{}{sch}
\end{icmlauthorlist}

\icmlaffiliation{riken}{RIKEN Center for Advanced Intelligence Project (RIKEN-AIP), Tokyo, Japan}
\icmlaffiliation{gut}{School of Automation, Guangdong University of Technology, Guangzhou, China}
\icmlaffiliation{tuat}{Tokyo University of Agriculture and Technology, Tokyo, Japan}
% \icmlaffiliation{sch}{School of ZZZ, Institute of WWW, Location, Country}

\icmlcorrespondingauthor{Qibin~Zhao}{qibin.zhao@riken.jp}
\icmlcorrespondingauthor{Chao~Li}{chao.li@riken.jp}

% You may provide any keywords that you
% find helpful for describing your paper; these are used to populate
% the "keywords" metadata in the PDF but will not be shown in the document
\icmlkeywords{Tensor Network, Tensor Decomposition}

\vskip 0.3in
]

% this must go after the closing bracket ] following \twocolumn[ ...

% This command actually creates the footnote in the first column
% listing the affiliations and the copyright notice.
% The command takes one argument, which is text to display at the start of the footnote.
% The \icmlEqualContribution command is standard text for equal contribution.
% Remove it (just {}) if you do not need this facility.

%\printAffiliationsAndNotice{}  % leave blank if no need to mention equal contribution
\printAffiliationsAndNotice{\icmlEqualContribution} % otherwise use the standard text.

\begin{abstract}
	Recent works put much effort into \emph{tensor network structure search} (TN-SS), aiming to select suitable tensor network (TN) structures, involving the TN-ranks, formats, and so on, for the decomposition or learning tasks.
	In this paper, we consider a practical variant of TN-SS, dubbed \emph{TN permutation search}~(TN-PS), 
	in which we search for good mappings from tensor modes onto TN vertices (core tensors) for compact TN representations.
	We conduct a theoretical investigation of TN-PS and propose a practically-efficient algorithm to resolve the problem.
	Theoretically, we prove the counting and metric properties of search spaces of TN-PS, analyzing for the first time the impact of TN structures on these unique properties. 
	Numerically, we propose a novel \emph{meta-heuristic} algorithm, in which the searching is done by randomly sampling in a neighborhood established in our theory, and then recurrently updating the neighborhood  until convergence.
	Numerical results demonstrate that the new algorithm can reduce the required model size of TNs in extensive benchmarks, implying the improvement in the expressive power of TNs.
	Furthermore, the computational cost for the new algorithm is significantly less than that in~\cite{li2020evolutionary}.	
\end{abstract}

\section{Introduction}\label{sec:Intro}

Over the years, \emph{tensor network}~(TN) has been widely applied to various technical fields.
It enables us to resolve extremely high-dimensional problems, such as deep learning~\citep{novikov2015tensorizing,kossaifi2020tensor}, probability density estimation~\citep{glasser2019expressive,miller2021tensor,novikov2021tensor}, partial differential equations~\citep{bachmayr2016tensor,pmlr-v139-richter21a}, and quantum circuit simulation~\citep{markov2008simulating,huggins2019towards}, with acceptable computational and storage costs.
However, as an inevitable side-effect, practitioners have to face a notorious challenge when applying TNs to practical tasks: how to efficiently select the optimal TN structures from a massive quantity of candidates?

Recent works thus put effort into this challenge, in the heading of \emph{TN structure search}~(TN-SS).
Most recently, there have been studies, which focus on searching TN ranks~\cite{hashemizadeh2020adaptive,kodryan2020mars}, formats~\cite{hayashi2019exploring,li2020evolutionary}, and orders~\cite{li2020high,qiu2021memory} to achieve more compact representations.
These results also confirm numerically that the structures impact the expressive power~\citep{cohen2016expressive} of TNs in learning tasks.

In this paper, we consider a practical variant of \mbox{TN-SS}.
The goal is to improve the compactness and expressiveness of a TN while \emph{preserving} its format, such as tensor train~(TT, \citealt{oseledets2011tensor}) or tensor ring~(TR, \citealt{zhao2016tensor}).
Several existing works~\cite{zhao2016tensor,zheng2021fully} have noticed that the mapping from tensor modes onto the TN vertices, also known as core tensors, also influences the expressive power of the model.
To see this, we implement a toy experiment, in which TR is utilized to approximate a tensor of order four.
Figure~\ref{fig:cover} shows the required ranks for achieving the same approximation accuracy in the experiment.
We see that a good ``mode-vertex'' mapping (corresponding to \emph{Model 1}) would produce smaller ranks than the other two models.
It implies lower computational and storage costs and more promising generalization capability in learning tasks~\cite{khavari2021lower}.
This fact thus motivates this work for searching both the optimal TN-ranks and ``mode-vertex'' mappings.

Despite the potential benefit, searching for the optimal ``mode-vertex'' mappings is non-trivial in general.
For instance, there would be $\mathcal{O}(N!)$ different candidates for TR of order $N\geq{}3$ even though the optimal ranks are known.
It is apparently unacceptable to solve it by exhaustive search, particularly when combinatorially searching the TN-ranks is required as well.
Unlike TN-SS, it also appears new theoretical questions for the variant: how fast the scale of the mappings grows with the structures parameters, \textit{e.g.}, format, order and ranks?
Why does the growth rate change?
And what bounds the growth?

To this end, we conduct a thorough investigation of this variant, named \emph{TN permutation search}~(TN-PS), from both the theoretical and numerical aspects.
Theoretically, we answer the preceding questions by analyzing the counting property of the search space of TN-PS, proving a universal non-asymptotic bound for TNs in \emph{arbitrary} formats.
The result is helpful for fast estimation of the computational budget in searching.
We also establish the basic geometry for TN-PS with group-theoretical instruments, involving the (semi-)metric and neighborhood of the search space, such that the local searching can be applied to the task.

Numerically, we develop an efficient algorithm for TN-PS.
In contrast to the existing sampling-based methods~\cite{hayashi2019exploring,li2020evolutionary},
we draw samples in the established neighborhood to explore the `steepest-descent'' path of the landscape, thereby accelerating the searching procedure and decreasing the computational cost.
Experimental results on extensive benchmarks demonstrate that the proposed algorithm is \emph{unique} to resolving TN-PS consistently so far, and the required model evaluations are much fewer than the previous algorithm.
We summarize the main contributions of this work as follows:
\begin{itemize}
	\item We propose for the first time the problem of \emph{tensor-network permutation search} (TN-PS), an important variant of TN-SS in practice;
	\item We rigorously prove new theoretical properties for TN-PS, involving the counting, (semi-)metric, and neighborhood, revealing how TN structures impact these properties;
	\item We develop a local-sampling-based meta-heuristic, which significantly reduces the computational cost compared to  \cite{li2020evolutionary}.
\end{itemize}

\begin{figure}
	\centering
	\includegraphics[width=0.9\columnwidth]{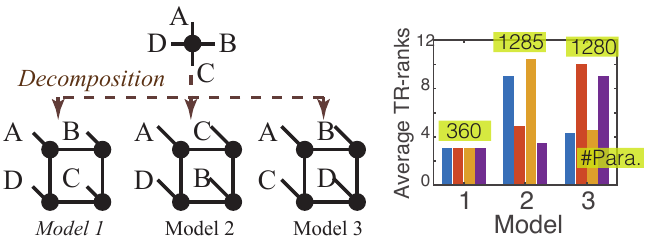}
	\caption{Impact of the ``mode-vertex'' relations on ranks in TR format.
		The chart shows the required ranks and number of parameters ($\#$Para.) in average. See supplementary materials for details.
	}\vskip -0.1in
	\label{fig:cover}
\end{figure}

\subsection{Related Works}
\noindent\textbf{Searching tensor-network (TN) structures.}
Searching the optimal structures for TNs is typically thought of as an extension of the rank selection problem for tensor learning~\citep{zhao2015bayesian,yokota2016smooth,zhao2016tensor,cheng2020novel,mickelin2020algorithms,cai2021blind,hawkins2021bayesian,li2021heuristic,long2021bayesian,sedighin2021adaptive}, which is widely known to be challenging, especially when the TN formats contain cycles~\citep{landsberg2011geometry,batselier2018trouble,Ye2019Tensor}.
More recently, several studies put much effort into this problem, \textit{i.e.}, TN-SS, for exploring unknown formats~\citep{hayashi2019exploring,hashemizadeh2020adaptive,kodryan2020mars,li2020evolutionary,nie2021adaptive}.
The latest works~\cite{razin2021implicit,razin2022implicit} also study the implicit regularization over TN-ranks.
Another line of work closely related to ours is those that study the partition issue for hierarchical Tucker~(HT, \citealt{falco2020geometry,haberstich2021active}) decomposition, which aims to search for the optimal tree structures.
Compared to these works, we focus on the search over the ``mode-vertex'' mappings, which have remained unexplored until now.

\noindent\textbf{Sampling-based optimization.}
Our new algorithm is inspired by zeroth-order optimization, which 
is also known as gradient-free optimization or bandit optimization. 
The methods can date back to stochastic hill-climbing~\cite{russell1995artificial}, followed by numerous evolutionary programming algorithms~\cite{back1996evolutionary}, and are restudied recently by~\citet{golovin2019gradientless} and applied to various machine learning tasks~\cite{liu2020primer,liu2020min,savarese2021domain,singh2021continuum}.
Inspired by the work~\cite{golovin2019gradientless}, we refine the sampling strategy for TN-PS by taking the unique property of the neighborhood into account but maintaining its original simplicity and efficiency.

\section{Preliminaries}
We first summarize notations and elementary results used throughout the paper.
After that, definitions related to \emph{tensor networks} (TNs) are reviewed for the self-contained purpose.

Throughout the paper, we use blackboard letters, such as $\mathbb{G}$ and $\mathbb{S}$, to denote sets of subjects.
With additional structures, they are also used to represent specific algebraic subjects according to the context, such as groups, fields or linear spaces.
In particular, we use $\mathbb{S}_N$, $\mathbb{R}$, $\mathbb{Z}^+$, and $\mathbb{R}^{I_1\times{}I_2\times\cdots\times{}I_N}$ to represent the \emph{symmetric group} of order $N$, the real field, positive integers and the real linear space of dimension $I_1\times{}I_2\times\cdots\times{}I_N$, respectively.
The \emph{size} of a finite set $\mathbb{A}$ is denoted by $\vert\mathbb{A}\vert$, and the \emph{Cartesian product} of two sets $\mathbb{A}$ and $\mathbb{B}$ is denoted by $\mathbb{A}\times{}\mathbb{B}$.
We say two sets, \textit{e.g.}, $\mathbb{A,B}$, are equivalent if there exists a bijective mapping from $\mathbb{A}$ and $\mathbb{B}$, and sometimes write $\mathbb{A}=\mathbb{B}$ without explicit declaration of the mapping if it is unambiguous.
For convenience, we use $[N]\subseteq{}\mathbb{Z}^+$ to denote a set of positive integers from $1$ to $N$, where $\subseteq$ represents the subset relation.

A \emph{graph} $G=(\mathbb{V},\mathbb{E})$ consists of a \emph{vertex} set $\mathbb{V}$ and an \emph{edge} set $\mathbb{E}$.
For a graph $G$ of $N$ vertices, the set of its \emph{automorphisms}, written $Aut(G)$, is a collection of vertex permutations, under which the edges are preserved, and equivalent to a \emph{subgroup} of the symmetric group, \textit{i.e.}, $Aut(G)\leq{}\mathbb{S}_N$.
We call $H=(\mathbb{V}_H,\mathbb{E}_N)$ a \emph{(spanning) subgraph} of $G$ if $\mathbb{V}_H=\mathbb{V}$ and $\mathbb{E}_H\subseteq{}\mathbb{E}$.
Let $K_N=(\mathbb{V},\mathbb{E}_{K_N})$ be a \emph{complete graph} with $N$ vertices and $\mathbb{G}_N$ be the set containing all subgraphs of $K_N$.  We then know that any simple graphs of $N$ vertices are elements of $\mathbb{G}_N$.
The \emph{minimum and maximum degree} of a graph $G$ are denoted by $\delta$ and $\Delta$, respectively.

\subsection{Tensor and Tensor Networks (TNs)}
We consider an \emph{order-$N$ tensor} as a multi-dimensional array of real numbers represented by \mbox{$\mathcal{X}_{i_1,i_2,\ldots,i_N}\in{}
	\mathbb{R}^{I_1\times{}I_2\times\cdots\times{}I_N}$}, where 
the indices $i_n,\,n\in{}[N]$ correspond to the $\mathbb{R}^{I_n}$-associated \emph{tensor mode}.
Sometimes we ignore the indices by representing the same tensor as $\mathcal{X}$ for notational simplicity.
\emph{Tensor contraction} roughly refers to the process of summing over a pair of repeated indices between two tensors, which is though of as a natural extension of matrix multiplication into high-order tensors.
An explicit calculation of tensor contraction used in this paper follows the definition in~\cite{cichocki2016tensor}.

We consider \emph{tensor network}~(TN) as defined by~\citet{Ye2019Tensor}.
Suppose a sequence of vector spaces $\mathbb{R}^{I_i},\,i\in{}[N]$ and an edge-labelled simple graph $(G,r)=(\mathbb{V},\mathbb{E},r)$, where
% $\
$r:\mathbb{E}\rightarrow{}\mathbb{Z}^+$ represents the function labelling edges with  positive integers.
TN is thus intuitively defined as a set of tensors, whose elements are of the form of a sequence of tensor contraction of ``core tensors'' corresponding to vertices of $G$.
See \citep{Ye2019Tensor} for an explicit definition of a TN.
% A
In the paper we refer to those core tensors as \emph{vertices}, to the unlabelled graph $G$ as TN \emph{format}, and to the function $r$ as  \emph{TN-(model)-ranks}.
Being consistent with \citep{Ye2019Tensor}, we use the same mathematical expression $TNS(G,r,\mathbb{R}^{I_1},\mathbb{R}^{I_2}\ldots,\mathbb{R}^{I_N})$ to represent a TN in our analysis.
The expression is also rewritten as $TNS(G,r)$ for shorthand if $\mathbb{R}^{I_n},\,n\in{}[N]$ are unimportant in the context.
Let $\mathbb{F}_G$ be the set consisting of all possible functions of $r$'s associated to $G$.
Then note that $\mathbb{F}_G$ is equivalent to a positive cone except zero of dimension $\vert{}\mathbb{E}\vert$, \textit{i.e.}, $\mathbb{F}_G=\mathbb{Z}^{+,\vert{}\mathbb{E}\vert}$.

\subsection{TN Structure Search~(TN-SS)}
Let $\mathcal{X}\in{}\mathbb{R}^{I_1\times{}I_2\times\cdots\times{}I_N}$ be an order-$N$ tensor.
% S
TN-SS \emph{without noise} is to solve an optimization problem as follows:
% \
\begin{equation}
\min_{r\in{}\mathbb{F}_{K_N}}\phi\left(K_N,r\right),\quad{}s.t.\,\mathcal{X}\in{}TNS(K_N,r),\label{eq:TNSS}
\end{equation}
where $\phi:\mathbb{G}_N\times{}\mathbb{F}_{K_N}\rightarrow{}\mathbb{R}$ represents a loss function measuring the model complexity of a TN.
Note that, although in~\eqref{eq:TNSS} the first term of $\phi$ is fixed to be $K_N$ , the TN format can degenerate into any simple graphs of $N$ vertices, as the edges of labeling with ``1'', \textit{i.e.}, $\{e\in\mathbb{E}_{K_N}\vert{}r(e)=1\}$, can be harmlessly discarded from the format~\citep{Ye2019Tensor,hashemizadeh2020adaptive}.
We see that solving~\eqref{eq:TNSS} is an \emph{integer programming} problem, generally NP-complete~\cite{papadimitriou1982complexity}.
Nevertheless, thanks to the fact $\mathbb{F}_{K_N}=\mathbb{Z}^{+,\vert\mathbb{E}_{K_N}\vert}$,
some practical algorithms have been proposed~\cite{hashemizadeh2020adaptive,kodryan2020mars,li2020evolutionary}, as $\mathbb{Z}^{+,\vert\mathbb{E}_{K_N}\vert}$ is a well-defined metric space with the isotropic property.
% w
However, we will see next that such good properties do not hold for TN-PS anymore in general.

\section{Tensor-Network Permutation Search~(TN-PS)}\label{sec:counting}

In this section, we first make precise the problem of TN-PS and then prove the properties involving counting, metric, and neighborhood, which are crucial for both understanding the problem and deriving efficient algorithms.

\subsection{Problem Setup}\label{sec:ProblemSetup}
Recall the example illustrated in Figure~\ref{fig:cover}.
Suppose a tensor $\mathcal{X}$ of order $N$ and a simple graph $G_0$, , dubbed \emph{template}, of $N$ vertices.
Apart from the TN-ranks, the primary goal of TN-PS is to find the optimal mappings in some sense from the modes of $\mathcal{X}$ onto vertices of $G_0$.
We thus easily see that the problem amounts to searching the optimal \emph{permutation} of vertices of a graph.
More precisely,
solving TN-PS is to repeatedly index the vertices of $G_0$ consecutively from $1$ to $N$, and then to seek the optimal index sequence in some sense from all possibilities.
Since the permutations are bijective to each other, the TN structures arising from these permutations naturally form an equivalence class, of which all elements preserve the same ``diagram'' as $G_0$.
Formally, such the equivalence class to the template $G_0=(\mathbb{V},\mathbb{E}_0)$ can be written as follows:
\begin{equation}
\mathbb{G}_0=\left\{G\in{}\mathbb{G}_N\vert{}G\cong{}G_0\right\},\label{eq:G0}
\end{equation}
where $\cong$ denotes the relation of \emph{graph isomorphism}, meaning that for each $G\in{}\mathbb{G}_0$ there exists a vertex permutation $g_G\in\mathbb{S}_N$ such that $G=(g_G\left(\mathbb{V}\right),\mathbb{E}_0)$ holds, or $G=g_G\cdot{}G_0$ for shorthand.
TN-PS (without noise) is thus defined by restricting the search space of~\eqref{eq:TNSS} to $\mathbb{G}_0$ as follows:
\begin{equation}
\min_{(G,r)\in{}\mathbb{G}_0\times\mathbb{F}_{G_0}}\phi\left(G,r\right),\quad{}s.t.\,\mathcal{X}\in{}TNS(G,r).\label{eq:TNPS}
\end{equation}
Compared to TN-SS, we search TN structures from a new space consisting of two ingredients: a non-trivial graph set $\mathbb{G}_0$ and $\mathbb{F}_{G_0}=\mathbb{Z}^{+,\vert\mathbb{E}_0\vert}$ that corresponds to the TN-ranks.
We see that TN-PS is no longer an integer programming problem as TN-SS due to the irregular geometry of $\mathbb{G}_0$.
Meanwhile, the size of the new search space varies with different template $G_0$.
Figure~\ref{fig:space} visualizes intuitively the ``geometrical shape'' of the search space for TN-PS associated to a template of three vertices and two edges.
We see that the search space of TN-PS is more ``collapsed'' than the original TN-SS.
One immediate consequence of collapsing is that the searching path and solutions for TN-SS would run out of the TN-PS region, thereby failing to preserve the original TN format. 
Next, we will establish formal statements for these observations, and the results will help develop feasible algorithms for resolving TN-PS.

\begin{figure}
	\centering
	\includegraphics[width=0.8\columnwidth]{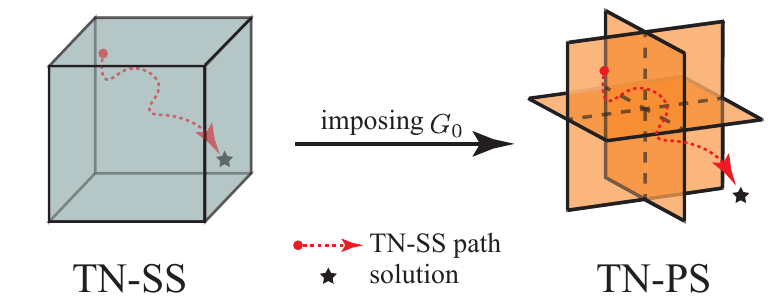}
	\caption{``Geometrical shape'' of search spaces of TN-SS and TN-PS, where the equivalence class $\mathbb{G}_0$ makes the ``shape'' for TN-PS as a combination of flips of low-dimensional spaces. 
	}\vskip -0.1in
	\label{fig:space}
\end{figure}

\subsection{Counting TN Structures}
We begin by counting the size of the new search space, proving that 
the graph degrees of the template $G_0$ give a universal bound for the size of the search space of TN-PS.

Suppose first a simple graph $G_0=(\mathbb{V},\mathbb{E}_0)$ of $N$ vertices as the template, by which we then construct the set $\mathbb{G}_0$ as Eq.~\eqref{eq:G0}.
As mentioned above, we have known two facts:
\emph{1)} $Aut(G_0)$ forms a subgroup of $\mathbb{S}_N$, \textit{i.e.}, $Aut(G_0)\leq{}\mathbb{S}_N$, such that $G_0=a\cdot{}G_0$ for any $a\in{}Aut(G_0)$;
and \emph{2)} for every $G\in\mathbb{G}_0$ there exists $g_G\in\mathbb{S}_N$ such that $G=g_G\cdot{}G_0$.
By these facts, $G=g^\prime\cdot{}G_0$ holds for any $g^\prime=g_G\cdot{}a$, implying that for each $G\in\mathbb{G}_0$ there exists a \emph{left coset} of $Aut(G_0)$, which is of the form \mbox{$g_G\cdot{}Aut(G_0):=\{g_G\cdot{}a\vert{}a\in{}Aut(G_0)\}$}.
According to the \emph{Lagrange's theorem} in group theory, we thus obtain the following equation with respect to the size of $\mathbb{G}_0$:
\begin{equation}
\vert{}\mathbb{S}_N\vert=\vert\mathbb{G}_0\vert\cdot\vert{}Aut(G_0)\vert.\label{eq:Lag}
\end{equation}
Table~\ref{tab:TNGraphs} lists the values of $\vert{}Aut(G_0)\vert$ associated with several commonly used TNs.
The size of $\mathbb{G}_0$ for those TNs can be therefore derived by~\eqref{eq:Lag}, shown in the last row of Table~\ref{tab:TNGraphs}.
\begin{table}[t]\small
	\caption{
		Illustration of several counting-related properties for commonly used TNs of order $N>3$, including \textit{tensor train} (TT, \citealt{oseledets2011tensor}), \textit{tensor tree} (TTree, \citealt{Ye2019Tensor}), TR and \textit{projected entangled pair states}~(PEPS, \citealt{verstraete2004renormalization}), where $G_0=(\mathbb{V},\mathbb{E}_0)$, and $\delta_0$ and $\Delta_0$ denote the minimum and maximum degree of $G_0$, respectively.
	}
	\vspace{0.3cm}
	\centering
	\renewcommand{\arraystretch}{1.2}
	\begin{tabular}{lcccc}\toprule
		& \textbf{TT} & \textbf{TTree} & \textbf{TR} & \textbf{PEPS}\\\midrule
		$G_0$ & Path & Tree & Cycle & Lattice \\
		$\delta_{0}$& $1$ & $1$ & $2$ & $2$\\
		$\Delta_{0}$& $2$ & $[2,N-1]$ & $2$ & $2,3,4$\\
		% $\vert{}\mathbb{V}\vert$& $N$ & $N$ & $N$ & $N$\\
		$\vert{}\mathbb{E}_0\vert$& $N-1$ & $N-1$ & $N$ & $\leq{}N$ \\
		{\scriptsize$\vert{}Aut(G_0)\vert$}& $2$ & $[2,(N-1)!]$ & $2N$ & $\leq{}N$\\
		$\vert\mathbb{G}_0\vert$& $N!/2$ & $[N,N!/2]$ & {\tiny$(N-1)!/2$} & $\leq{}N!/4$\\
		
		\bottomrule
	\end{tabular}
	\vskip -0.1in
	\label{tab:TNGraphs}
\end{table}

However, counting the automorphisms for a general graph is difficult~\citep{chang1995finding}. 
Blow we prove that the size of the search space of TN-PS is controlled by the the minimum and maximum degree of $G_0$.
For convenience, we further assume
that TN-ranks are only searched within a finite range $\mathbb{F}_{G_0,R}\subset{}\mathbb{F}_{G_0}$, meaning that the rank $r(e)\leq{}R$ holds for any $r\in{}\mathbb{F}_{G_0,R}$ and $e\in{}\mathbb{E}_0$.
We then have the following counting bounds.
\begin{theorem}\label{thm:OverallBounds}
	Assume $G_0$ to be a simple and connected graph of $N$ vertices, and $\mathbb{G}_0$ is constructed as~\eqref{eq:G0}.
	Let $\delta=N/d_1$ and $\Delta=N/d_2$, $d_1\geq{}d_2>1$, be the minimum and maximum degree of $G_0$, respectively.
	The size of the search space of~\eqref{eq:TNPS}, written $\mathbb{L}_{G_0,R}:=\mathbb{G}_0\times{}\mathbb{F}_{G_0,R}$, is bounded as follows:
	\begin{equation}
	\begin{split}
	R^{\frac{N^2}{2d_2}}\cdot{}N!\geq\vert{}\mathbb{L}_{G_0,R}\vert\geq{}
	R^{\frac{N^2}{2d_1}}\cdot{}e^{\gamma(d_2)\cdot{}N-\frac{1}{2}\log{}d_2-1/24}
	\end{split},\label{eq:SizeBound}
	\end{equation}
	where $\gamma(d)=\log{}d+\frac{1}{d}-1$ is a positive and monotonically increasing function for $d>1$.
\end{theorem}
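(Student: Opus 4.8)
The plan is to factor the search space as $\mathbb{L}_{G_0,R}=\mathbb{G}_0\times\mathbb{F}_{G_0,R}$, so that $\vert\mathbb{L}_{G_0,R}\vert=\vert\mathbb{G}_0\vert\cdot\vert\mathbb{F}_{G_0,R}\vert$, and to bound the two factors separately. Since each edge carries an independent rank in $\{1,\ldots,R\}$, we have $\vert\mathbb{F}_{G_0,R}\vert=R^{\vert\mathbb{E}_0\vert}$, and the handshaking lemma together with $\delta\le\deg(v)\le\Delta$ gives $N\delta/2\le\vert\mathbb{E}_0\vert\le N\Delta/2$, i.e. $N^2/(2d_1)\le\vert\mathbb{E}_0\vert\le N^2/(2d_2)$. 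As $R\ge1$, this sandwiches $R^{\vert\mathbb{E}_0\vert}$ between $R^{N^2/(2d_1)}$ and $R^{N^2/(2d_2)}$, supplying the two $R$-powers appearing in~\eqref{eq:SizeBound}. For the graph factor I would invoke Lagrange's theorem~\eqref{eq:Lag} in the form $\vert\mathbb{G}_0\vert=N!/\vert Aut(G_0)\vert$. The upper bound is then immediate: $\vert Aut(G_0)\vert\ge1$ forces $\vert\mathbb{G}_0\vert\le N!$, and multiplying by $R^{N^2/(2d_2)}$ already yields the stated upper estimate. The whole difficulty concentrates in the lower bound, which needs a \emph{uniform} upper estimate of $\vert Aut(G_0)\vert$ in terms of the maximum degree alone.

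The key step I would prove is $\vert Aut(G_0)\vert\le N\cdot\Delta!\cdot\Delta^{N-1-\Delta}$. Fix a vertex $v^\ast$ of maximum degree $\Delta$ and run a breadth-first search from it; since $G_0$ is connected, every other vertex acquires a BFS-parent. An automorphism $\sigma$ is then pinned down coordinate by coordinate in BFS order: there are at most $N$ images for $v^\ast$; once $\sigma(v^\ast)$ is chosen, $\sigma$ restricts to a bijection from the neighbourhood $\mathcal{N}(v^\ast)$ onto $\mathcal{N}(\sigma(v^\ast))$, and since automorphisms preserve degree this neighbourhood has exactly $\Delta$ elements, contributing at most $\Delta!$ choices (\emph{not} $\Delta^{\Delta}$); finally each of the remaining $N-1-\Delta$ vertices must map to a neighbour of its already-fixed parent's image, contributing at most $\Delta$ each. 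Rooting at a maximum-degree vertex is precisely what makes this bound tightest, because enlarging the counted neighbourhood trades cheap $\Delta$-factors for the factorial saving $\Delta^{\Delta}/\Delta!\approx e^{\Delta}$; this saving is exactly the gap between a naive BFS count $N\Delta^{N-1}$ and what the theorem requires.

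Finally I would feed this estimate into $\vert\mathbb{G}_0\vert=N!/\vert Aut(G_0)\vert$ and simplify with Stirling's formula, using $N!\ge\sqrt{2\pi N}(N/e)^N$ and $\Delta!\le\sqrt{2\pi\Delta}(\Delta/e)^{\Delta}e^{1/(12\Delta)}$. Substituting $\Delta=N/d_2$ collapses the $\Delta$-powers against the $N$-powers, the square-root prefactors combine into the term $-\tfrac12\log d_2$, the exponential parts assemble into $\gamma(d_2)\,N=(\log d_2+1/d_2-1)N$, and the residual Stirling error $1/(12\Delta)$ is absorbed by $1/24$ because $\Delta\ge2$ (which holds for any connected graph on $N\ge3$ vertices). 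Multiplying the resulting lower bound on $\vert\mathbb{G}_0\vert$ by $R^{N^2/(2d_1)}$ then closes~\eqref{eq:SizeBound}. I expect the main obstacle to be the automorphism estimate itself: the bound must depend on $\Delta$ only, yet be sharp enough to capture the $e^{\Delta}$-order factorial saving, so the BFS traversal has to be organised so that the entire neighbourhood of a maximum-degree root is accounted for as a single permutation rather than vertex-by-vertex; the subsequent Stirling bookkeeping is routine but must be handled carefully to produce the exact constants $-\tfrac12\log d_2$ and $1/24$.
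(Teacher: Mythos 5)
Your proposal is correct, and its overall skeleton coincides with the paper's: the same factorization $\vert\mathbb{L}_{G_0,R}\vert=\vert\mathbb{S}_N\vert\cdot R^{\vert\mathbb{E}_0\vert}/\vert Aut(G_0)\vert$ via Lagrange's theorem, the same handshaking-lemma sandwich $N\delta/2\le\vert\mathbb{E}_0\vert\le N\Delta/2$, and the same Stirling bookkeeping at the end. The one genuine divergence is the key automorphism estimate. The paper does not prove it: Lemma~\ref{thm:CountingLemma} imports Theorem~2 of \citet{krasikov2006upper}, which gives $\vert Aut(G_0)\vert\le N\cdot\Delta!\cdot(\Delta-1)^{N-\Delta-1}$, whereas you supply a self-contained BFS argument yielding the marginally weaker $N\cdot\Delta!\cdot\Delta^{N-1-\Delta}$. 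Your argument is sound: rooting at a maximum-degree vertex, an automorphism is pinned down by the image of the root ($\le N$ choices), a degree-preserving bijection of its $\Delta$-element neighbourhood ($\le\Delta!$ choices), and, processing the remaining $N-1-\Delta$ vertices in BFS order, the constraint that each must map into the neighbourhood of its parent's already-fixed image ($\le\Delta$ choices each). Crucially, the weakening costs nothing here: the paper's own derivation of \eqref{eq:L32Ineq} immediately relaxes $\log(\Delta-1)\le\log\Delta$, so both routes land on exactly the same exponent $-\gamma(d)N+\tfrac12\log d+1/24$ (your use of $1/(12\Delta)\le 1/24$ for $\Delta\ge 2$ matches the paper's, including the shared implicit restriction to $N\ge 3$). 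What the comparison buys: your version makes the theorem elementary and citation-free, while the cited Krasikov--Lev--Thatte bound is sharper in principle but that sharpness is discarded before it can matter.
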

Proving the above theorem requires the following lemma about an upper-bound of the size of $Aut(G_0)$, of which the proof is given in Appendix~\ref{sec:Proofs}. 
\begin{lemma}\label{thm:CountingLemma}
	% Let $\mathbb{S}_N$ be the symmetric group of degree $N$,  
	Let $G_0$ be a simple graph of $N$ vertices, and $Aut(G_0)$ be the set containing automorphisms of $G_0$.
	Assume that $G_0$ is connected and its maximum degree $\Delta$ satisfies $N/\Delta=d>1$, then the inequality
	\begin{equation}
	\left\vert{}Aut(G_0)\right\vert\leq{}N!\cdot{}e^{-\gamma(d)\cdot{}N+\frac{1}{2}\log{}d+1/24}\label{eq:CountingAutExplict}
	\end{equation}
	holds, where $\gamma(\,\cdot\,)$ is defined in Theorem~\ref{thm:OverallBounds}. 
\end{lemma}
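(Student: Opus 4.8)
The plan is to bound $\vert Aut(G_0)\vert$ directly through a greedy, breadth-first reconstruction of an automorphism, exploiting that automorphisms preserve vertex degrees and that $G_0$ is connected. First I would fix a vertex $v_1$ of maximum degree $\Delta$ and grow a breadth-first spanning tree $T$ of $G_0$ rooted at $v_1$. Since any $\sigma\in Aut(G_0)$ preserves degrees, $\sigma(v_1)$ ranges over the (at most $N$) vertices of degree $\Delta$; and since $\sigma$ preserves adjacency, once $\sigma$ is known on a vertex $u$ its tree-children must be sent injectively into the neighborhood $N(\sigma(u))$ while avoiding already-used images. Processing the vertices in BFS order therefore realizes $\sigma$ as a sequence of such local choices, giving an injection from $Aut(G_0)$ into a set of admissible choice-sequences whose cardinality I can count.

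Counting these choices yields the clean intermediate estimate
\begin{equation*}
\vert Aut(G_0)\vert \le N\cdot\Delta!\cdot(\Delta-1)^{N-1-\Delta}.
\end{equation*}
The factor $N$ bounds the number of images of the root. The root has exactly $\Delta$ tree-children (all its neighbors), which fill the $\Delta=\vert N(\sigma(v_1))\vert$ slots and hence contribute at most $\Delta!$. Every remaining non-root vertex has at least one neighbor already imaged (its tree-parent), so each of the remaining $N-1-\Delta$ child-placements offers at most $\Delta-1$ free slots; here I use that the total number of tree-children is $N-1$ and that the root accounts for $\Delta$ of them.

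To finish I would feed this estimate into Stirling's inequalities $\sqrt{2\pi n}\,(n/e)^n e^{1/(12n+1)}\le n!\le \sqrt{2\pi n}\,(n/e)^n e^{1/(12n)}$, using the upper bound on $\Delta!$ and the lower bound on $N!$, substituting $\Delta=N/d$, and bounding $(\Delta-1)^{N-1-\Delta}\le \Delta^{N-1-\Delta}$. After taking logarithms the dominant terms $N\log\Delta-\Delta$ cancel against $\log N!-\gamma(d)N$, and the $\tfrac12\log d$ contributions cancel as well, so the whole inequality collapses to the single residual condition $\tfrac{1}{12\Delta}-\tfrac1{24}\le 0$, which holds precisely because a connected graph with $d>1$ has $\Delta\ge 2$. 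This is exactly how the constant $1/24$ in \eqref{eq:CountingAutExplict} arises --- as Stirling's correction $1/(12\Delta)$ evaluated at the smallest admissible degree $\Delta=2$ --- while the $\tfrac12\log d$ term is recovered from the $\sqrt{2\pi n}$ prefactors.

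The main obstacle is the rigorous bookkeeping of the first step rather than any deep idea: I must argue that the greedy BFS assignment genuinely injects $Aut(G_0)$ into the choice-sequences (in particular that cross-edges and same-level neighbors never increase the count of free slots), that rooting at a maximum-degree vertex forces all $\Delta$ of its neighbors to be tree-children, and that the per-vertex slot counts multiply correctly to $\Delta!\,(\Delta-1)^{N-1-\Delta}$. Once this combinatorial injection is established, the passage to \eqref{eq:CountingAutExplict} is a mechanical, if delicate, Stirling computation whose only subtlety is tracking the lower-order terms tightly enough to land the stated constant.
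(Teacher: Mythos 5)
Your proof is correct and follows the same two-step skeleton as the paper's: first establish the intermediate bound $\vert Aut(G_0)\vert \le N\cdot\Delta!\cdot(\Delta-1)^{N-\Delta-1}$, then push it through Stirling's formula with $\Delta=N/d$ to extract $\gamma(d)$, the $\tfrac12\log d$ term, and the constant $1/24$ (via $1/(12\Delta)\le 1/24$, dropping the negative $-1/(12N+1)$ term). The only genuine difference is that the paper simply cites this intermediate inequality as Theorem~2 of Krasikov et al., whereas you re-derive it from scratch by the BFS choice-sequence injection. Your combinatorial argument is sound: rooting at a maximum-degree vertex makes all $\Delta$ of its neighbours tree-children filling exactly the $\Delta$ slots of $N(\sigma(v_1))$ (degrees are preserved), and every vertex at depth $\ge 2$ has its tree-grandparent's image already occupying one slot of $N(\sigma(p(u)))$, leaving at most $\Delta-1$ choices; since the BFS tree spans $G_0$, the choice sequence determines $\sigma$, so the map is injective and the product bound follows. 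What your route buys is self-containedness at the cost of the bookkeeping you already identify; what the paper's route buys is brevity. One small caveat, which the paper's own proof shares: the step $1/(12\Delta)\le 1/24$ needs $\Delta\ge 2$, and your justification ``connected with $d>1$ implies $\Delta\ge 2$'' fails for the degenerate case $N=2$, $G_0=K_2$ (where $\Delta=1$, $d=2$); this is harmless for the TNs of order $N>3$ the lemma is applied to, but strictly speaking the hypothesis $\Delta\ge 2$ (equivalently $N\ge 3$) should be made explicit.
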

As shown in~\eqref{eq:SizeBound}, the bounds of $\vert\mathbb{L}_{G_0,R}\vert$ are determined by three factors: the number of vertices $N$, the searching range of TN-ranks $R$, and the graph degrees of $G_0$ parameterized by $d_1$ and $d_2$.
Figure~\ref{fig:Bound} shows the bounds in ~\eqref{eq:SizeBound} with varying these factors.
We see from the left panel that the upper and lower bounds go closer with increasing the value of $d$~(where we assume $d_1=d_2=d$ for brevity).
It implies that the bounds are tight for graphs with small degrees. 
We also see from the middle panel that $\vert\mathbb{L}_{G_0,R}\vert$ grows fast with $N$, even though the graph degree $d$ been sufficiently small such as in TT/TR, while the growth is relatively slow with increasing $R$, the search range for TN-ranks.

\begin{figure}
	\centering
	\includegraphics[width=1\columnwidth]{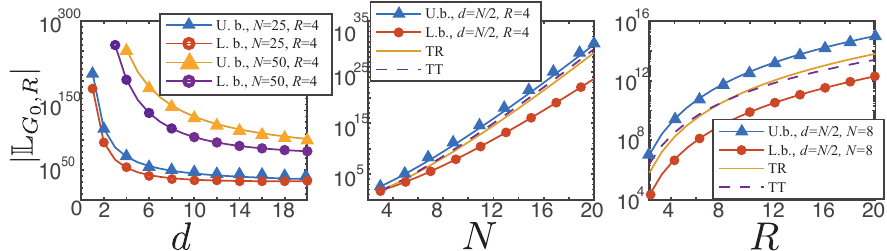}
	\vspace{-0.4cm}
	\caption{Illustration of bounds given in Theorem~\ref{thm:OverallBounds} with varying the parameters $d$, $N$ and $R$, where ``U.b.'' and ``L.b.'' denote the upper and lower bounds, respectively, and $d_1=d_2=d$.
	}\vskip -0.1in
	\label{fig:Bound}
\end{figure}

\subsection{Semi-Metric and Neighborhood}
The notion of metric and neighborhood of the search space are fundamental for most \emph{steepest-descent}-based optimization methods.
We have seen that they are well-defined for TN-SS but remain unknown for TN-PS.
To address the issue, we establish below a new (semi-)metric and neighborhood for TN-PS with rigorous proofs using \emph{group-theoretic} instruments.
The application of these results to developing efficient algorithms for TN-PS will be introduced in the next section.

We begin by establishing the semi-metric, a relaxation of metric satisfying separation and symmetry except possibly for the triangle inequality, over the graph set $\mathbb{G}_0$.
Although there has been much literature in which different definitions of the graph metric or similarity are proposed, most of them are computationally hard~\cite{koutra2011algorithms}.
Unlike those works, we construct the semi-metric over $\mathbb{G}_0$ based on the equivalence property of its elements given in~\eqref{eq:G0}, so it can be built up by graph isomorphisms in a simple fashion.

Recall the symmetric group $\mathbb{S}_N$.
Let $\mathbb{T}_N\subseteq{}\mathbb{S}_N$ be the set consisting of its all adjacent transpositions, the operations of swapping adjacent two integers in $[N]$ and fixing all other integers.
We thus know from group theory that $\mathbb{T}_N$ generates $\mathbb{S}_N$.
Furthermore, 
let $d_{\mathbb{T}_N}:\mathbb{S}_N\times{}\mathbb{S}_N\rightarrow{}R$ be the \emph{word metric}~\citep{luck2008survey} of $\mathbb{S}_N$ induced by $\mathbb{T}_N$.
Intuitively, the value of $d_{\mathbb{T}_N}(p_1,p_2),\,p_1,p_2\in\mathbb{S}_N$ reflects the minimum number of adjacent swapping operations required for transforming the permutation from $p_1$ to $p_2$.
Since we saw in Section~\ref{sec:ProblemSetup} that for each $G\in\mathbb{G}_0$ there is a permutation $g_G\in{}\mathbb{S}_N$ such that $G=g_G\cdot{}G_0$,  we thus construct a function $d_{G_0}:\mathbb{G}_0\times{}\mathbb{G}_0\rightarrow{}\mathbb{R}$ using the word metric $d_{\mathbb{T}_N}$ as follows:
\begin{equation}
d_{G_0}\left(G_1,G_2\right)=\min_{
	p_i\in{}g_i\cdot{}Aut(G_0),i=1,2
}d_{\mathbb{T}_N }(p_1,p_2),\label{eq:metric}
\end{equation}
where $G_1,G_2\in\mathbb{G}_0$ and $g_1,g_2\in\mathbb{S}_N$ are permutations satisfying $G_i=g_i\cdot{}G_0$, $i=1,2$.
The following lemma shows that \eqref{eq:metric} is in fact a semi-metric function, followed by the construction of the corresponding neighborhood in $\mathbb{G}_0$.
\begin{lemma}\label{thm:metric}
	Let $G_0$ be a simple graph and $\mathbb{G}_0$ be the set defined as~\eqref{eq:G0}.
	The function $d_{G_0}:\mathbb{G}_0\times{}\mathbb{G}_0\rightarrow{}\mathbb{R}$ defined by \eqref{eq:metric} is a semi-metric on $\mathbb{G}_0$.
	Furthermore, let $\mathbb{I}_d\left(G\right)$ be a set constructed as follows:
	\begin{equation}
	\begin{split}
	\mathbb{I}_d\left(G\right)=
	\{G'&\in\mathbb{G}_0\vert{}G'=q\prod_{i=1}^d{}t_i\cdot{}G_0,\\
	&q\in{}g\cdot{}Aut(G_0),\,t_i\in{}\mathbb{T}_N,\,i\in[d]\}
	\end{split}.\label{eq:nh}
	\end{equation}
	Then $\mathbb{N}_D\left(G\right)=\bigcup_{d=0}^D\mathbb{I}_d(G)$ is the neighborhood of \mbox{$G=g\cdot{}G_0\in{}\mathbb{G}_0$} induced by~\eqref{eq:metric}, with the radius 
	$D\in{}\mathbb{Z}^+\cup\{0\}$.
\end{lemma}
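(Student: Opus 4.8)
The plan is to establish the two claims of the lemma separately: first that $d_{G_0}$ is a well-defined semi-metric on $\mathbb{G}_0$, and second that $\mathbb{N}_D(G)$ is the induced neighborhood of radius $D$. For the semi-metric claim, the key observation is that $d_{G_0}$ is defined as a minimum of the word metric $d_{\mathbb{T}_N}$ over pairs of coset representatives. I would first verify that the definition does not depend on the particular choice of $g_1,g_2$ with $G_i=g_i\cdot G_0$: since any two such representatives differ by an element of $Aut(G_0)$, the left cosets $g_i\cdot Aut(G_0)$ are themselves well-defined, so the minimization ranges over the same pair of cosets regardless of the chosen $g_i$. This makes $d_{G_0}$ well-defined as a function on $\mathbb{G}_0\times\mathbb{G}_0$.

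Next I would check the two defining properties of a semi-metric, namely symmetry and separation (identity of indiscernibles). Symmetry is inherited directly from the symmetry of the word metric $d_{\mathbb{T}_N}$ on $\mathbb{S}_N$, since swapping the roles of $G_1$ and $G_2$ swaps the two cosets in the minimization without changing the minimum. For separation, I would argue that $d_{G_0}(G_1,G_2)=0$ forces some $p_1\in g_1\cdot Aut(G_0)$ and $p_2\in g_2\cdot Aut(G_0)$ to satisfy $d_{\mathbb{T}_N}(p_1,p_2)=0$; because $d_{\mathbb{T}_N}$ is a genuine metric on $\mathbb{S}_N$, this yields $p_1=p_2$, hence the two cosets intersect and must coincide, giving $g_1\cdot G_0=g_2\cdot G_0$, i.e. $G_1=G_2$. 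Nonnegativity is immediate from that of the word metric. Crucially, I would emphasize that I do \emph{not} attempt to prove the triangle inequality, consistent with the definition of a semi-metric stated in the paper; I would note briefly why it can fail, namely that the coset-wise minimum of a metric need not satisfy the triangle inequality because the optimal representatives for the pair $(G_1,G_2)$ and for $(G_2,G_3)$ may select different elements of the middle coset $g_2\cdot Aut(G_0)$.

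For the neighborhood claim, the goal is to show that $\mathbb{N}_D(G)=\bigcup_{d=0}^{D}\mathbb{I}_d(G)$ coincides with the closed ball $\{G'\in\mathbb{G}_0\mid d_{G_0}(G,G')\leq D\}$. Here I would use that $\mathbb{T}_N$ generates $\mathbb{S}_N$ and that the word metric counts the minimal number of generators needed: a permutation $p$ lies at word distance exactly $d$ from the identity iff $p=\prod_{i=1}^{d}t_i$ for some adjacent transpositions $t_i\in\mathbb{T}_N$ with $d$ minimal. Translating this to the coset formulation, I would show $G'\in\mathbb{I}_d(G)$ precisely captures those $G'$ reachable by applying $d$ adjacent transpositions to some representative $q\in g\cdot Aut(G_0)$, and then connect $d_{G_0}(G,G')$ to the minimal such $d$ via the definition \eqref{eq:metric}; taking the union over $d\leq D$ then produces the closed ball of radius $D$. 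The bookkeeping across left cosets is what makes this slightly delicate: an element $G'=q\prod_{i=1}^{d}t_i\cdot G_0$ may also be writable with fewer transpositions using a different representative of its own coset, so the containment $\mathbb{I}_d(G)\subseteq\mathbb{N}_D(G)$ for $d\leq D$ is clear but the reverse identification of $\mathbb{N}_D(G)$ with the metric ball requires matching the minimization in \eqref{eq:metric} against the generator count.

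The main obstacle I anticipate is precisely this interplay between the word metric and the cosets of $Aut(G_0)$: one must show that the minimum over representatives in \eqref{eq:metric} is consistent with the product-of-transpositions description in \eqref{eq:nh}, so that ``distance $\leq D$'' and ``expressible with at most $D$ adjacent transpositions modulo $Aut(G_0)$'' are genuinely the same condition. Once the well-definedness on cosets is secured and the word-metric characterization via minimal generator length is invoked, the semi-metric properties fall out cleanly and the neighborhood identification reduces to unwinding definitions. I would take care throughout to reuse the two facts from Section~\ref{sec:ProblemSetup}, that $Aut(G_0)\leq\mathbb{S}_N$ and that every $G\in\mathbb{G}_0$ has a representative $g_G$ with $G=g_G\cdot G_0$, since these underpin the coset structure that the entire construction rests on.
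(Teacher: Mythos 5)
Your proposal is correct and follows essentially the same route as the paper's proof: separation and symmetry are derived from the disjointness of distinct left cosets of $Aut(G_0)$ and the metric properties of $d_{\mathbb{T}_N}$, and the neighborhood identification is established by matching the word-metric's minimal-generator-count characterization (together with left-invariance and $t_i^{-1}=t_i$) against the coset minimization in \eqref{eq:metric}, arguing both containments. Your additional remarks on well-definedness over coset representatives and on why the triangle inequality may fail are sound supplements but do not change the argument.
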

\begin{algorithm}[tb]
	\caption{Random sampling over $\mathbb{I}_d(G)$}
	\label{alg:sampling}
	\begin{algorithmic}
		\STATE {\bfseries Input:}
		Center: $G\in\mathbb{G}_0$ with $N$ vertices; Radius: $d$.
		\STATE {\bfseries Initialize:}
		$G^\prime=G$ where $G^\prime=(\mathbb{V}^\prime,\mathbb{E}^\prime)$.
		\FOR{$k=1$ {\bfseries to} $d$}
		\STATE Uniformly draw $i,j\in{}[N],\,i\neq{}j$ in random.
		\STATE Choose $v_i,v_j\in\mathbb{V}^\prime$ and swap them.
		\ENDFOR
		\STATE {\bfseries Output:} $G^\prime$.
	\end{algorithmic}
\end{algorithm}
We see from~\eqref{eq:nh} that $\mathbb{N}_D\left(G\right)$ consists of combinations of two sets: $Aut(G_0)$ and $\mathbb{T}_N$, followed by the permutation representative $g$ associated to the center graph $G$.
It thus suggests a straightforward sampling method over $\mathbb{N}_D\left(G\right)$, that is, combinatorially sampling over $Aut(G_0)$ and $\mathbb{T}_N$ from some distributions.
However, obtaining all elements of $Aut(G_0)$ is computationally hard~(NP-intermediate,~\citealt{goldwasser1989knowledge}) in general.
To avoid this, we prove that sampling using Alg.~\ref{alg:sampling} can cover all elements of $\mathbb{I}_d(G)$ without sampling directly over $Aut(G_0)$.

\begin{theorem}\label{thm:cover}
	For every $G^\prime\in{}\mathbb{I}_d(G)$ with $G\in\mathbb{G}_0$ and $d\geq{}1$, the probability that the output of Alg.~\ref{alg:sampling} equals $G^\prime$ is positive.
\end{theorem}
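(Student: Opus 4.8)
The plan is to exhibit, for each target $G' \in \mathbb{I}_d(G)$, an explicit sequence of $d$ transposition draws that forces Algorithm~\ref{alg:sampling} to output exactly $G'$, and then to observe that this particular sequence of draws has strictly positive probability. First I would describe the algorithm's output group-theoretically: recalling that swapping vertices $v_i,v_j$ is the natural left action of the transposition $(i\,j)\in\mathbb{S}_N$ on graphs (consistent with the notation $G=g\cdot G_0$), the output after the $d$ iterations is $s\cdot G$, where $s=\sigma_d\cdots\sigma_1$ and each $\sigma_k$ is the (general, not necessarily adjacent) transposition swapped in iteration $k$. Since each iteration draws an unordered pair $\{i,j\}$ uniformly, every prescribed transposition occurs with probability $1/\binom{N}{2}$, and hence every prescribed ordered sequence $(\sigma_1,\ldots,\sigma_d)$ occurs with probability $\binom{N}{2}^{-d}>0$. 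It therefore suffices to realize $G'$ as $s\cdot G$ for some fixed product $s$ of exactly $d$ transpositions.

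Next I would unfold the neighborhood. By~\eqref{eq:nh}, any $G'\in\mathbb{I}_d(G)$ can be written as $G'=g\,a\,t_1\cdots t_d\cdot G_0$, where $G=g\cdot G_0$, $a\in Aut(G_0)$, and $t_1,\ldots,t_d\in\mathbb{T}_N$ are adjacent transpositions. The central step is to absorb the automorphism $a$ by conjugation: set $s:=(ga)\,(t_1\cdots t_d)\,(ga)^{-1}$ and note that $s=\prod_{k=1}^{d}\big[(ga)\,t_k\,(ga)^{-1}\big]$ is a product of \emph{exactly} $d$ transpositions, since a conjugate of a transposition is again a transposition. I would then verify the identity $s\cdot G=(ga)(t_1\cdots t_d)(ga)^{-1}g\cdot G_0=g\,a\,t_1\cdots t_d\,a^{-1}\cdot G_0=g\,a\,t_1\cdots t_d\cdot G_0=G'$, where the third equality uses $a^{-1}\in Aut(G_0)$ and hence $a^{-1}\cdot G_0=G_0$. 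Choosing the algorithm's draws as $\sigma_k=(ga)\,t_{d-k+1}\,(ga)^{-1}$ then reproduces $s$, so the output equals $G'$ with probability at least $\binom{N}{2}^{-d}>0$, which is the claim.

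The main obstacle, and the only genuinely nontrivial point, is the apparent mismatch in ``budget'': $\mathbb{I}_d(G)$ is generated by $d$ \emph{adjacent} transpositions together with an \emph{arbitrary} automorphism $a$, whereas Algorithm~\ref{alg:sampling} is allotted only $d$ \emph{general} transpositions and performs no explicit automorphism step. A naive strategy that first realizes $a$ and then the $t_i$ would overspend the $d$-transposition budget whenever $a\neq\mathrm{id}$. The conjugation trick resolves exactly this: conjugating each adjacent transposition by the fixed representative $ga$ keeps it a single transposition, preserving the count $d$, while the flanking factors $a$ and $a^{-1}$ cancel against $G_0$ precisely because $a$ is an automorphism. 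I would emphasize that the freedom to use \emph{general} (rather than adjacent) transpositions in the sampler is what creates room for these conjugates, and note that the edge case $d=0$ is trivial since $\mathbb{I}_0(G)=\{G\}$.
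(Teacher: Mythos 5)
Your proof is correct and follows essentially the same route as the paper: the paper's Lemma A.5 establishes exactly your conjugation identity, writing $G'=\prod_{k=1}^{d}c_k\cdot G$ with $c_k=(gA')t_k(gA')^{-1}$ a general $2$-cycle, and then concludes by noting every such $2$-cycle is drawn with positive probability by Algorithm~1. The only cosmetic difference is that the paper builds the decomposition inductively ($d=1$, then $d=2$, then general $d$) while you write the telescoping product in one step.
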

The (semi-)metric and neighborhood for the overall search space of TN-PS, \textit{i.e.}, $\mathbb{G}_0\times{}\mathbb{F}_{G_0,R}$, can be thus derived by composing the Euclidean metric of $\mathbb{F}_{G_0,R}\subseteq{}\mathbb{Z}^{+,\vert\mathbb{E}_0\vert}$.
In the next section, Alg.~\ref{alg:sampling} will be applied to the new algorithm, by which the searching efficiency is significantly improved.

\section{Meta-Heuristic via Local Sampling}
We present now a new meta-heuristic algorithm for searching TN structures.
Unlike the existing methods such as~\cite{li2020evolutionary}, we exploit the information of the ``steepest-descent'' direction, estimated by sampling over a neighborhood of the search space, to accelerate the searching procedure.

Suppose a tensor $\mathcal{X}$ of order $N$.
For the practical purpose, we take the influence of noise into~\eqref{eq:TNPS}, which is given by
\begin{equation}
\begin{split}
&\min_{G,r,\mathcal{Z}}\phi\left(G,r\right)+\lambda\cdot{}RSE(\mathcal{X},\mathcal{Z})\\
s.t.&\,(G,r)\in\mathbb{G}_0\times{}\mathbb{F}_{G_0,R},\,\mbox{and }\mathcal{Z}\in{}TNS(G,r)\label{eq:PTNPS}
\end{split},
\end{equation}
where $\lambda>0$ denotes a tuning parameter associated with the noise variance, and $RSE$ is the function of \emph{relative squared error}~(RSE) for modeling the influence of Gaussian noise.
In other applications such as in generative models~\cite{liu2021tensor},
it can be replaced by KL-divergence without modifying the algorithm details.
The searching algorithm is illustrated in Alg.~\ref{alg:main}, where $N_{[R]}(a,b)$ denotes a Gaussian distribution of the mean $a$ and variance $b$, followed by a truncation operation such that the samples out of the range $[R]$ are pulled back to the closest bound, $Ber(p)$ denotes the Bernoulli distribution of the mean $p$, and $\mathbf{I}$ denotes the identity matrix of dimension $\vert\mathbb{E}_0\vert\times{}\vert\mathbb{E}_0\vert$.

In each iteration, we elaborate the algorithm into three phases: \emph{local-sampling, evaluation}, and \emph{updating}.
Suppose the starting point $(G^{\{m\}},r^{\{m\}})$ at the $m$th iteration.
In the local-sampling phase, we randomly draw samples for both the TN-ranks ($s_k$) and the ``mode-vertex'' maps ($H_k$) over the neighborhood centered at $(G^{\{m\}},r^{\{m\}})$.
The aim is to explore good descent directions within the neighborhood.
The sampling distributions, involving the rounded truncated Gaussian and the uniform distribution in Alg.~\ref{alg:sampling}, are chosen as a non-informative searching prior.
Note that for the two variance-related parameters $c_1,c_2\in[0.9,1)$, we apply the annealing trick to the shrinkage of the sampling range in each iteration.
The trick guarantees the convergence of the algorithm.
In the evaluation phase, we employ \emph{arbitrarily} proper optimization or learning methods to minimize RSE or other alternatives for the sampled structures $(H_k,s_k),\,k\in{}[\#Sample]$.
In the updating phase, we calculate the overall loss function $f_k$ for each sampled structures, and then update $(G^{\{m+1\}},r^{\{m+1\}})$ once there exists new samples, whose performance is better than $(G^{\{m\}},r^{\{m\}})$.
More precisely, let $f_0$ be the loss of~\eqref{eq:PTNPS} with respect to $(G^{\{m\}},r^{\{m\}})$ and $f_{min}$ be the minimum among all $f_k$'s.
If $f_{min}<f_0$, we update $(G^{\{m+1\}},r^{\{m+1\}})=(H_{min},s_{min})$; otherwise, we remain $(G^{\{m+1\}},r^{\{m+1\}})=(G^{\{m\}},r^{\{m\}})$.

\noindent\textbf{Discussion.}
Compared with the global-sampling methods, such as genetic algorithm~(GA)~\cite{hayashi2019exploring,li2020evolutionary}, we restrict the sampling range into the neighborhoods of the structure candidates, rather than the whole search space.
The advantages in doing so are mainly two-folds:
first, the neighborhood geometry allows to construct gradient-like directions, which result in a faster decrease of the loss function if the landscape is related smooth;
second, a smaller sampling range can mitigate the curse of dimensionality.
Otherwise, the algorithm would ``lose itself'' in searching if the TN structure is large scale.
However, it should be mentioned that the local-sampling methods would perform worse than the global-searching ones if the landscape is too ``flat'' or ``swinging''.
We conjecture with rich empirical observations that TN-PS (including TN-SS) seems more suitable for ``local-sampling'' methods.
Since a small perturbation on the structure such as ranks would not dramatically change the RSE, it implies that the landscape of~\eqref{eq:PTNPS} tends to be smooth.
Although a rigorous discussion on this issue remains open, we use extensive numerical results to verify the efficiency of Alg.~\ref{alg:main} in the next section.

\begin{algorithm}[tb]
	\caption{TN-structure Local Sampling (TNLS)}
	\label{alg:main}
	\begin{algorithmic}
		\STATE {\bfseries Input:} template graph: $G_0=(\mathbb{V},\mathbb{E}_0)$; searching range of TN-ranks: $R$; function: \mbox{$f(G,r,\mathcal{Z}):=\phi\left(G,r\right)+\lambda\cdot{}RSE(\mathcal{X},\mathcal{Z})$}, maximum iteration: $\#Iter>0$; number of sampling: $\#Sample>0$; tuning parameters: $c_1,c_2\in[0.9,1)$.
		%   \REPEAT
		\STATE {\bfseries 0. Initialize:}\\
		$(G^{\{0\}},r^{\{0\}})$ with random selection from $\mathbb{G}_0\times{}\mathbb{F}_{G_0,R}$.
		\STATE Obtain $\mathcal{Z}^{\{0\}}$ by arbitrary TN approximation methods with $(G^{\{0\}},r^{\{0\}})$.
		\FOR{$m=1$ {\bfseries to} $\#Iter$}
		\STATE {\bfseries 1. Local sampling:}
		\FOR{$k=1$ {\bfseries to} $\#Sample$}
		\STATE Sample $s_k\sim{}N_{[R]}(r^{\{m\}},c_1^{m-1}\cdot\mathbf{I})$ with rounding.
		\IF{\texttt{TRUE} by sampling from $Ber(c_2^{m-1})$}
		\STATE Sample $H_k\in\mathbb{G}_0$ from $\mathbb{I}_1(G^{\{m\}})$ using Alg.~\ref{alg:sampling}.
		\ELSE 
		\STATE $H_k=G^{\{m\}}$.
		\ENDIF
		\ENDFOR
		\STATE { \textbf{2. Evaluation:} \emph{(be possible in parallel)}}
		\FOR{$k=1$ {\bfseries to} $\#Sample$}
		\STATE Obtain $\mathcal{Z}_k$ by arbitrary TN approximation methods with the given $(H_k,s_k)$.
		\ENDFOR 
		\STATE {\bfseries 3. Update:}
		\begin{equation}\nonumber
		\begin{split}
		(G&^{\{m+1\}},r^{\{m+1\}},\mathcal{Z}^{\{m+1\}})=\\
		\arg&\min_{k}\{f(G,r,\mathcal{Z})\vert{}(G,r,\mathcal{Z})=(G^{\{m\}},r^{\{m\}},\mathcal{Z}^{\{m\}}),\\
		&(G,r,\mathcal{Z})=(H_k,s_k,\mathcal{Z}_k)\}.
		\end{split}
		\end{equation}
		\ENDFOR\\
		%   \STATE 
		{\bfseries Return:} $(G^{\{m+1\}},r^{\{m+1\}})$.
	\end{algorithmic}
\end{algorithm}

\section{Experimental Results}
In this section, we numerically verify the effectiveness and efficiency of the proposed method on both the synthetic and real-world tensors.
\begin{table*}[t]
	\centering
	\caption{Experimental results on synthetic data in TT/TR format.
		In the table, \emph{Eff.} denotes the parameter efficiency defined in~\cite{li2020evolutionary},
		$\#Eva.$ denotes the number of evaluations, and ``-'' denotes that the methods fail to satisfy the condition $RSE\leq{}10^{-4}$.
	}
	% 	\vskip 0.15in
	\begin{threeparttable}\small\label{tab:TR}
		\setlength{\tabcolsep}{1.7mm}{   	
			\begin{tabular}{ccccc|cc|cc}
				\toprule
				\multirow{4}[0]{*}{\textbf{Trial}}&\multicolumn{4}{c}{\emph{TR methods}}&\multicolumn{2}{c}{\emph{TN-SS methods}}&\multicolumn{2}{c}{\emph{TN-PS methods}}\\
				& \textbf{TR-SVD} &  \textbf{TR-LM} & \textbf{TR-ALSAR} & \textbf{Bayes-TR} & \textbf{Greedy}&\textbf{TNGA}$^\star$&\textbf{TNGA+}&\textbf{TNLS}\\
				\cmidrule{2-9}
				{}&\multicolumn{8}{c}{\textbf{ Order 4} -- \emph{Eff.$\uparrow$~$\langle$\emph{Is TT/TR format preserved? {\color{green}Yes} or {\color{red}No}.}$\rangle$~$[$\#Eva.$\downarrow$$]$} }\\
				\midrule
				\textbf{A} & 1.00 & 1.00 & 0.21 &  1.00&1.00~$\langle${\color{green}Yes}$\rangle$ &1.00~$\langle${\color{green}Yes}$\rangle$~[600] & 1.00~[450]&1.00~[\textbf{361}]  \\
				\textbf{B} & 0.64& 1.00 & 1.00 &  0.64 &0.89$\langle${\color{red}No}$\rangle$& 1.00~$\langle${\color{green}Yes}$\rangle$~[300] &1.00~[450] &1.00~[\textbf{241}] \\
				\textbf{C} & 1.17 & 1.17 &0.23 &  1.00&1.17~$\langle${\color{green}Yes}$\rangle$ &1.17~$\langle${\color{green}Yes}$\rangle$~[750] & 1.17~[450]&1.17~[\textbf{181}]  \\
				\textbf{D} & 0.57 & 0.57 & 0.32 &  -&1.00~$\langle${\color{green}Yes}$\rangle$ &1.00~$\langle${\color{green}Yes}$\rangle$~[450] &1.00~[300]&1.00~[\textbf{301}]   \\
				\textbf{E} & 0.43& 0.48 & 0.40 &  0.40&1.00~$\langle${\color{green}Yes}$\rangle$ &1.00~$\langle${\color{green}Yes}$\rangle$~[1050] & 1.00~[450] &1.00~[\textbf{361}] \\
				
				\midrule
				
				{}&\multicolumn{8}{c}{\textbf{ Order 6} -- \emph{Eff.$\uparrow$~$\langle$\emph{Is TT/TR format preserved? {\color{green}Yes} or {\color{red}No}.}$\rangle$~$[$\#Eva.$\downarrow$$]$} }\\
				\midrule
				\textbf{A}&0.21&0.44&-&-&0.16~$\langle${\color{red}No}$\rangle$&0.82~$\langle${\color{red}No}$\rangle$~[1650] &\textbf{1.00}~[1500]&\textbf{{1.00}}~[\textbf{661}]\\
				\textbf{B}&0.14&0.15&0.14&-&0.27$~\langle${\color{red}No}$\rangle$&- &\textbf{1.00}~[1350]&\textbf{1.00}~[\textbf{601}]\\
				\textbf{C}&0.57&1.00&0.85&0.29&0.97~$\langle${\color{red}No}$\rangle$&1.00~$\langle${\color{green}Yes}$\rangle$~[3300] &1.00~[1800]&1.00~[\textbf{661}]\\
				\textbf{D}&0.21&0.39&0.10&0.13&1.04~$\langle${\color{green}Yes}$\rangle$&1.04~$\langle${\color{green}Yes}$\rangle$~[2700] &\textbf{1.16}~[1500]&\textbf{1.16}~[\textbf{601}]\\
				\textbf{E}&0.15&0.30&-&0.12&1.00~$\langle${\color{green}Yes}$\rangle$&1.00~$\langle${\color{green}Yes}$\rangle$~[2400] &1.00~[1050]&1.00~[\textbf{541}]\\

				\midrule
				{}&\multicolumn{8}{c}{\textbf{ Order 8} -- \emph{Eff.$\uparrow$~$\langle$\emph{Is TT/TR format preserved? {\color{green}Yes} or {\color{red}No}.}$\rangle$~$[$\#Eva.$\downarrow$$]$} }\\
				\midrule
				\textbf{A}&0.10&0.16&-&0.03&0.88~$\langle${\color{red}No}$\rangle$&0.48~$\langle${\color{red}No}$\rangle$~[2550]&\textbf{1.00}~[2850]&\textbf{1.00}~[\textbf{1021}]\\
				\textbf{B}&0.09&0.43&-&-&0.61$\langle${\color{red}No}$\rangle$&-&\textbf{1.02}~[2250]&\textbf{1.02}~[\textbf{961}]\\
				\textbf{C}&0.03&0.31&-&0.02&\textbf{1.16}$\langle${\color{red}No}$\rangle$&0.49~$\langle${\color{red}No}$\rangle$~[2250]&1.11~[3750]&1.11~[\textbf{1321}]\\
				\textbf{D}&0.20&0.53&-&-&1.03$\langle${\color{red}No}$\rangle$&0.32~$\langle${\color{red}No}$\rangle$~[4050]&\textbf{1.06}~[1950]&\textbf{1.06}~[\textbf{781}]\\
				\textbf{E}&0.33&0.33&-&-&\textbf{1.17}~$\langle${\color{green}Yes}$\rangle$&0.23~$\langle${\color{red}No}$\rangle$~[1500]&0.88~[1500]&\textbf{1.17}~[\textbf{901}]\\

				\bottomrule
			\end{tabular}
		}
	\end{threeparttable}
	% \vskip -0.1in
\end{table*}

\subsection{Synthetic Data in TT/TR Format and Beyond}
Using synthetic data, we first verify:
\textbf{(a)} TN-PS can reduce the required TN model size for the low-rank tensor approximation task, reflecting the improvement of the expressive power of TNs;
and \textbf{(b)} the proposed local-sampling method achieves more efficient searching than the existing sampling-based methods.

\begin{figure}
	\centering
	\includegraphics[width=1\columnwidth]{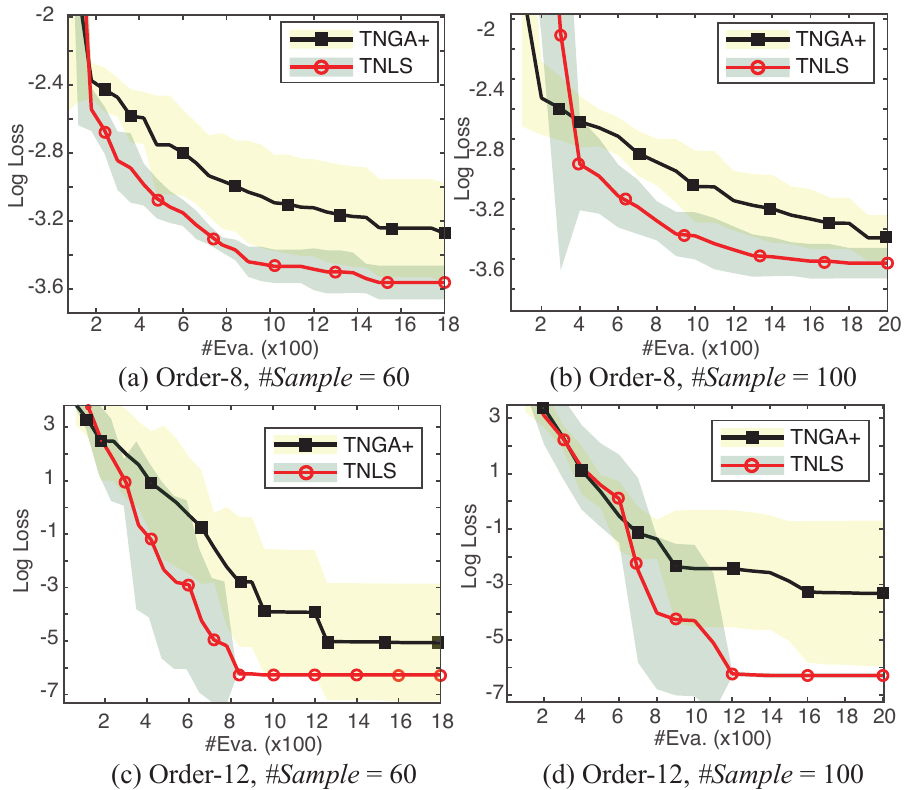}
	\vspace{-0.6cm}
	\caption{Average loss with varying the number of evaluations.
	}\vskip -0.1in
	\label{fig:convergence}
\end{figure}

\noindent\textbf{Data generation.}
We choose TT/TR, the most commonly used TN formats in machine learning, to generate tensor data.
For each tensor order, \textit{i.e.}, the number of vertices, $N\in\{4,6,8\}$, we generate five tensors by randomly choosing ranks and values of vertices (core tensor).
In more detail, the dimensions for each tensor modes are set to equal $3$.
Here we choose a small dimension as same as the one in~\citep{li2020evolutionary} because it is typically irrelevant to the searching difficulty, as shown in Theorem~\ref{thm:OverallBounds}.
Meanwhile, we uniformly select the TN-ranks from $\{1,2,3,4\}$ in random, and \textit{i.i.d.} draw samples from Gaussian distribution $N(0,1)$ as the values of vertices.
After contracting all vertices, we finally uniformly permute the tensor modes in random.
The permutations maintain unknown for all algorithms in the experiment.

\noindent\textbf{Experiment setup.}
In our method, we set the template $G_0$ as a cycle graph, the searching range for TN-ranks $R=7$, the maximum iteration $\#Iter=30$, the number of samples $\#Sample=60$, and the tuning parameters $c_1=0.9$ and $c_2=0.9, 0.94, 0.98$ for three different tensor orders $N=4,6,8$, respectively.

In the experiment, we also implement various TR decomposition methods with adaptive rank selection for comparison, including TR-SVD and TR-ALSAR~\citep{zhao2016tensor}, Bayes-TR~\citep{tao2020bayesian}, and TR-LM~\citep{mickelin2020algorithms}.
We also compare our method with two SOTAs for TN-SS, including ``Greedy''~\cite{hashemizadeh2020adaptive} and TNGA~\citep{li2020evolutionary}.
Note that the original TNGA is forced to search only the TN formats.
For a fair comparison, we extend it into two new versions: in ``TNGA$^\star$'' we trivially allow the method to search the formats and ranks simultaneously; and in ``TNGA+'' we use the classic ``random-key'' trick~\cite{bean1994genetic} to encode $\mathbb{G}_0$ into chromosomes, such that TNGA+ is capable of solving TN-PS as well but remaining the same genetic operations as TNGA.
In these two methods and ours, we set the function $\phi$ in~\eqref{eq:PTNPS} to be the compression ratio defined in~\cite{li2020evolutionary}, the parameter $\lambda=200$, and apply the ``Adam'' optimizer~\cite{kingma2014adam} with the same parameters in the evaluation phase of Alg.~\ref{alg:main}.

We force all methods to achieve $RSE\leq{}10^{-4}$ for each tensor, a pretty small approximation error, in the experiment.
Otherwise we say the approximation fails.
For performance evaluation, we use the \emph{Eff.} index~\citep{li2020evolutionary}, the ratio of parameter number of TNs between the searched structure and the one in data generation, as the main performance measure.
A larger value of \emph{Eff.} implies using fewer parameters to achieve the close approximation error.
For the sampling-based methods, we also report the total number of evaluations required to obtain the solution, shown as $[\#Eva.]$ in Table~\ref{tab:TR}, reflecting the computational cost for those methods.

\begin{table}[t]
	\centering
	\caption{
		Experimental results on synthetic data in various TN formats.
	}
	% 	\vskip 0.15in
	\begin{threeparttable}\small\label{tab:TN}
		\setlength{\tabcolsep}{1.5mm}{   	
			\begin{tabular}{llcccc}
				\toprule
				\multirow{2}[0]{*}{\textbf{TNs}}&\multirow{2}[0]{*}{\textbf{Methods}}&\multicolumn{4}{c}{\textbf{Trial} -- \emph{Eff.$\uparrow$}~$\langle$\emph{formats preserved? {\color{green}Y} or {\color{red}N}.}$\rangle$ }\\
				\cmidrule{3-6}&
				& \textbf{A} & \textbf{B} & \textbf{C}  & \textbf{D} \\
				\midrule
				
				\multirow{3}[0]{*}{\textbf{TTree}}
				% \vspace{0.1cm}
				&TNGA+&1.29&1.17&1.11&1.55
				\\
				% \vspace{0.1cm}
				&TNLS&1.29&1.17&1.11&1.55
				\\
				&Greedy&1.29~$\langle${\color{red}N}$\rangle$&1.03~$\langle${\color{red}N}$\rangle$&0.79~$\langle${\color{red}N}$\rangle$&1.27~$\langle${\color{red}N}$\rangle$
				
				\\
				% \midrule
				\cmidrule{2-6}
				
				\multirow{3}[0]{*}{\textbf{PEPS}}
				&TNGA+&1.14&1.00&1.00&1.21
				\\
				% \vspace{0.1cm}
				&TNLS&1.14&1.00&1.00&1.21
				\\
				% \vspace{0.1cm}
				&Greedy&1.03~$\langle${\color{red}N}$\rangle$&1.00~$\langle${\color{green}Y}$\rangle$&1.13~$\langle${\color{green}Y}$\rangle$&1.21~$\langle${\color{green}Y}$\rangle$
				\\
				
				% \midrule
				\cmidrule{2-6}
				
				\multirow{3}[0]{*}{\textbf{HT}}
				&TNGA+&1.45&1.21&1.18&1.29
				\\
				% \vspace{0.1cm}
				&TNLS&1.45&1.21&1.18&1.29
				\\
				% \vspace{0.1cm}
				&Greedy&1.81~$\langle${\color{red}N}$\rangle$&1.91~$\langle${\color{red}N}$\rangle$&1.73~$\langle${\color{red}N}$\rangle$&2.16~$\langle${\color{red}N}$\rangle$
				\\
				\cmidrule{2-6}
				
				\multirow{3}[0]{*}{\textbf{MERA}}
				&TNGA+&0.95&1.32&2.30&1.00~
				\\
				% \vspace{0.1cm}
				&TNLS&1.09&1.88&2.88&1.03
				\\
				% \vspace{0.1cm}
				&Greedy&-&1.65~$\langle${\color{red}N}$\rangle$&0.78~$\langle${\color{red}N}$\rangle$&0.88~$\langle${\color{red}N}$\rangle$
				\\
				\bottomrule
				
			\end{tabular}
		}
	\end{threeparttable}
	\vskip -0.1in
\end{table}
\begin{table}[t]
	\caption{Average RSE$\downarrow$ for \emph{image completion} by methods TRALS~\citep{wang2017efficient}, Greedy, TNGA$+$, and TNLS (ours), where the values in square brackets are the number of evaluations required by methods.
	}
	\vspace{0.3cm}
	\centering
	\begin{threeparttable}\centering\small
		
		\begin{tabular}{ccccc}
			\toprule

			& \textbf{TRALS} & \textbf{Greedy} & \textbf{TNGA+}   & \textbf{TNLS} \\
			\midrule
			
			$90\%$&0.192&0.186&\textbf{0.178}~[4929]&\textbf{0.178}~[\textbf{1544}]\\
			$70\%$&0.142&0.142&\textbf{0.132}~[6086]&0.136~[\textbf{1437}]\\
			\bottomrule
			
		\end{tabular}
	\end{threeparttable}\label{tab:completion}
	\vskip -0.1in
\end{table}

\noindent\textbf{Results.}
The experimental results are reported in Table~\ref{tab:TR}, where the symbol ``$-$'' denotes that the method fails to achieve the condition $RSE\leq{}10^{-4}$, and the ``$\langle${\color{green}Yes} or {\color{red}No}$\rangle$'' denotes whether the structures obtained by methods preserve the TT/TR format or not.
As shown in Table~\ref{tab:TR}, TNGA+ and TNLS achieve the best \emph{Eff.} values, implying that they utilize the fewest parameters to achieve the approximation accuracy similar to others.
Most TR methods with rank selection obtain poor performance in this case, and the performance of the two TN-SS methods also becomes worse with increasing the tensor order.
We also see that ``Greedy'' and TNGA$^\star$ cannot guarantee the solution preserving the TR format, even though the data are generated with TR.
Last, we observe that TNLS costs significantly fewer evaluations than TNGA+ to achieve the same \emph{Eff.}.
To see this, we generate another two tensors of order-$8,12$, respectively, and employ TNGA+ and TNLS to search the optimal TR structures.
Figure~\ref{fig:convergence} shows the loss with varying the number of evaluations, averaged from five \textit{i.i.d.} runs.
We see that TNLS obtains consistently faster loss reduction than TNGA+ with the same number of evaluations.

As well as TT/TR, we also verify the effectiveness of our method in other TN formats, including TTree (order-7), PEPS~(order-6), hierarchical Tucker~(HT, order-6,~\citealt{hackbusch2009new}) and multi-scale entanglement renormalization ansatz~(MERA, order-8,~\citealt{cincio2008multiscale,reyes2020multi}), which are widely known in both machine learning and physics.
Being consistent with the results for TT/TR,
the results in Table~\ref{tab:TN} also demonstrate the effectiveness of our method.

	\begin{table*}[t]
	\caption{Average compression ratio$\uparrow$ ($\log$) and RSE$\downarrow$ (in parentheses) for \emph{image compression}, where ``Reshape'' and ``VDT'', denote two different tensorization operations, and the values in square brackets are the number of evaluations required by methods.}
	\vspace{0.2cm}
	\centering
	\begin{threeparttable}
		
		\centering\small  	
		\begin{tabular}{lccccc}
			\toprule

			&\textbf{TR-SVD}& \textbf{TR-LM} & \textbf{Greedy} & \textbf{TNGA+}&\textbf{TNLS}\\
			\midrule
			Reshape&0.92~(0.15)&0.90~(0.14)&0.95~(0.15)&\textbf{1.36}~(\textbf{0.13})~[5700]&1.32~(0.14)~[\textbf{1876}]\\
			VDT&1.10~(0.15)&1.07~(0.15)&0.91~(0.15)&1.28~(0.15)~[5700]&\textbf{1.30}~(0.15)~[\textbf{1546}]\\

			\bottomrule
			
		\end{tabular}\label{tab:compression}
		% 		}
	\end{threeparttable}
	\vskip -0.1in
\end{table*}
\begin{table*}[t]
	\caption{Number of parameters ($\times{}1000$) and mean square error (MSE, in round brackets) for \emph{TGP model compression}, where CCPP, MG and Protein are three datasets, and the values in [square brackets] show the number of evaluations required in each method.}
	\vspace{0.3cm}
	\centering\small
	\begin{threeparttable}\centering	
		\begin{tabular}{lccc}
			\toprule

			& \textbf{CCPP} & \textbf{MG} & \textbf{Protein} \\
			\midrule
			
			\textbf{TGP}~\cite{izmailov2018scalable}&2.64~(0.06)~[N/A]&3.36~(0.33)~[N/A]&2.88~(0.74)~[N/A]\\
			\textbf{TNGA+}&\textbf{2.24}~(0.06)~[1500]&\textbf{3.01}~(0.33)~[4900]&2.03~(0.74)~[3900]\\
			\textbf{TNLS}&\textbf{2.24}~(0.06)~[\textbf{1051}]&\textbf{3.01}~(0.33)~[\textbf{3901}]&\textbf{1.88}~(0.74)~[\textbf{3601}]\\
			\bottomrule
			
		\end{tabular}
		% 		}
	\end{threeparttable}\label{tab:GP}
	\vskip -0.1in
\end{table*}
\subsection{Real-World Data}
We now apply the proposed method to three benchmark problems on real-world data: \emph{image completion},
\emph{image compression},
and \emph{model compression} of tensorial Gaussian process~(TGP).
In these problems, TT/TR has been widely exploited~\cite{wang2017efficient,izmailov2018scalable,yuan2019tensor}.
The goal of the experiment is to verify whether imposing TN-PS can further improve their performance.
Note that in all benchmarks TNGA$+$ and TNLS are implemented in the TR format.
See Appendix for experimental details.

\noindent
\textbf{Image completion.}
TNLS is utilized to predict the missing values from natural images.
In the experiment, seven images from USC-SIPI~\cite{weber1997usc} are chosen, resized by $256\times{}256$, and then reshaped by visual data tensorization~(VDT)~\cite{latorre2005image,bengua2017efficient,yuan2019high} into a tensor of order-9.
After that, the entries are uniformly removed at random with the missing rate $\{70\%,90\%\}$, respectively.
The average of the prediction RSE is shown in Table~\ref{tab:completion}.

\noindent
\textbf{Image compression.}
Tensor decomposition methods are utilized to compress ten natural images randomly chosen from BSD500~\cite{arbelaez2010contour}.
In the data preparation phase, the images are grayscaled, resized by $256\times{}256$, and then reshaped into order-$8$ tensors by both the VDT and the trivial reshaping operations as done in \emph{Python}.
The results are shown in Table~\ref{tab:compression}, where we demonstrate the compression ratio~(in the form of logarithm base $10$) and the corresponding RSE.
For comparison, we implement the methods, including TR-SVD, TR-LM, Greedy, TNGA+, and TNLS, in the experiment.

\noindent
\textbf{Compressing TGP models.}
In this task, we consider compressing not data but parameters of a learning model.
To be specific, we compress the well-trained high-dimensional variational mean of TGP~\cite{izmailov2018scalable} by tensor decomposition.
We evaluate the performance using three datasets for the regression task, including CCPP~\cite{tufekci2014prediction}, MG~\cite{flake2002efficient}, and Protein~\cite{Dua2019}, for which we have the targeted tensors of the order-$\{4,6,9\}$, respectively.
Table~\ref{tab:GP} shows the number of the parameters ($\times{}1000$) after decomposition and the corresponding mean square error (MSE, in the round brackets) for each dataset.

\noindent
\textbf{Results.}
We can observe from the experimental results that TN-PS can boost the performance of the TR models in all tasks.
With the search of vertex permutations, \textit{i.e.}, the ``mode-vertex'' mappings, the expressive and generalization power of the TR models can be significantly improved.
Compared with TN-SS methods like Greedy, TN-PS takes more ``inductive bias'' modeled as the template.
As a consequence, imposing suitable ``inductive bias'' accelerates the searching process and helps avoid the loss in high-dimensional landscapes. 
Compared with TNGA$+$, TNLS achieves similar performance in three tasks and costs significantly less number of evaluations.
This result is expected as we mentioned that the local sampling could leverage the more efficient ``steepest-descent'' path, which is not thought of in the GA-based methods.

\section{Concluding Remarks}
The experiential results demonstrate that TN-PS, a new variant of searching TN structures, can further improve the expressive power of TNs in various tasks.
The new searching algorithm TNLS is verified as being more efficient than existing sampling-based algorithms, with fewer evaluations and faster convergence.

Our theoretical results analyze how the symmetry of TN formats determines the number of all possible ``mode-vertex'' mappings, \textit{i.e.}, the counting property, proving that a universal bound on the counting property exists if the TN formats are sufficiently sparse.
We also establish the basic geometry of the search space for TN-PS.
By the graph isomorphism relation of the TN structures, we construct a semi-metric function and prove its corresponding neighborhood for the search space.
There results are applied as the theoretical foundation to the proposed sampling algorithm.
Taken together, TN-PS explores more efficient TN structures for tensor decomposition/completion/learning tasks, preserving the TN formats in contrast to the previous TN-SS problem.

\noindent\textbf{Limitation.}
One main limitation of our method is the higher running time compared with the greedy method~\cite{hashemizadeh2020adaptive} in searching.
A rigorous analysis about the smoothness of the landscape of TN-SS/PS also remains open.
Our code is available at~\url{https://github.com/ChaoLiAtRIKEN/TNLS}.

\section*{Acknowledgements}
We appreciate the anonymous (meta-)reviewers of ICML~2022 for helpful comments.
We are indebted to Chunmei for the help in proofreading the manuscript, Cichocki and Yokota for the insightful suggestions on the motivation, and Pan and Hashemizadeh for comments that improved the manuscript. 
This work was partially supported by JSPS KAKENHI (Grant No. 20K19875, 20H04249, 20H04208) and the National Natural Science Foundation of China (Grant No. 62006045, 62071132).
Part of the computation was carried out at the Riken AIp Deep learning ENvironment~(RAIDEN).

\bibliography{references.bib}
\bibliographystyle{icml2022}

% %%%%%%%%%%%%%%%%%%%%%%%%%%%%%%%%%%%%%%%%%%%%%%%%%%%%%%%%%%%%%%%%%%%%%%%%%%%%%%%
% %%%%%%%%%%%%%%%%%%%%%%%%%%%%%%%%%%%%%%%%%%%%%%%%%%%%%%%%%%%%%%%%%%%%%%%%%%%%%%%
% % APPENDIX
% %%%%%%%%%%%%%%%%%%%%%%%%%%%%%%%%%%%%%%%%%%%%%%%%%%%%%%%%%%%%%%%%%%%%%%%%%%%%%%%
% %%%%%%%%%%%%%%%%%%%%%%%%%%%%%%%%%%%%%%%%%%%%%%%%%%%%%%%%%%%%%%%%%%%%%%%%%%%%%%%
\newpage
\appendix
\onecolumn
In the appendix, we first give the proofs for the results mentioned in the main body of the paper.
After that, more details of the experiments, including tuning parameter settings and additional experimental results, will be introduced.

\section{Proofs}\label{sec:Proofs}
\subsection{Proof of Lemma~\ref{thm:CountingLemma}}
\begin{proof}
	To obtain the result, we first have the following inequalities:
	\begin{equation}
	\begin{split}
	&\log\vert{}Aut(G_0)\vert-\log\vert{}\mathbb{S}_N\vert\leq{}\log{}N+\log\Delta!+(N-\Delta-1)\log(\Delta-1)-\log{}N!\\
	&\leq{}
	(1/2-N)\log{}N+(\Delta+1/2)\log{}\Delta+(N-\Delta-1)\log(\Delta-1)+N-\Delta-\frac{1}{12N+1}+\frac{1}{12\Delta}\\
	&\leq{}
	(1/2-N)\log\frac{N}{\Delta}+N-\Delta+1
	=(1/2-N)\log{}d+N-N/d+1/12,
	\end{split}\label{eq:L32Ineq}
	\end{equation}
	where the first inequality follows from Theorem 2 given in~\cite{krasikov2006upper}.
	In this theorem, it is proved that $\vert{}Aut(G_0)\vert$ is above bounded by the maximum graph degree, written $\Delta$, as follows:
	\begin{equation}
	\log\vert{}Aut(G_0)\vert\leq{}\log{}N+\log\Delta!+(N-\Delta-1)\log(\Delta-1).
	\end{equation}
	The second inequality of~\eqref{eq:L32Ineq} follows by $\vert\mathbb{S}_N\vert=N!$ and Stirling approximation of factorials, by which the terms $\log\Delta!$ and $-\log{}N!$ are bounded as follows:
	\begin{equation}
	\begin{split}
	&\log\Delta!\leq{}0.5\log{}2\pi+(\Delta+1/2)\log{\Delta}-\Delta+\frac{1}{12\Delta},
	\end{split}
	\end{equation}
	and
	\begin{equation}
	\begin{split}
	&-\log{}N!\leq{}-0.5\log{}2\pi-(N+1/2)\log{}N+N-\frac{1}{12N+1},
	\end{split}
	\end{equation}
	respectively.
	In the third line of~\eqref{eq:L32Ineq}, the (in-)equalities follows from: $\log(\Delta-1)\leq{}\log(\Delta)$, $1/\left(12\Delta\right)\leq{}1/24$ and $N>0$, and the assumption $N/\Delta=d$.
	The proof is thus accomplished by eliminating the logarithm on the both sides of the inequality.
\end{proof}

\subsection{Proof of Theorem~\ref{thm:OverallBounds}}
\begin{proof}
	According to the Lagrange's theorem in group theory, the size of $\mathbb{L}_{G_0,R}$ is equal to 
	\begin{equation}
	\vert{}{}\mathbb{L}_{G_0,R}\vert=\frac{\vert{}\mathbb{S}_N\vert\cdot\vert{}\mathbb{F}_{G_0,R}\vert}{\vert{}Aut(G_0)\vert}=\frac{\vert{}\mathbb{S}_N\vert\cdot\vert{}\mathbb{Z}_R\vert^{\vert\mathbb{E}_0\vert}}{\vert{}Aut(G_0)\vert},\label{eq:Size1}
	\end{equation}
	where the equation $\vert\mathbb{F}_{G_0,R}\vert=\vert{}\mathbb{Z}_R\vert^{\vert\mathbb{E}_0\vert}$ holds by the TN-PS model.
	By the handshaking lemma in graph theory,
	\begin{equation}
	\vert{}\mathbb{E}_0\vert=\frac{1}{2}\sum_{n=1}^N{}deg(v_n),
	\end{equation}
	where $G_0=(\mathbb{V},\mathbb{E}_0)$, $v_n\in{}\mathbb{V}$ for $n\in{}[N]$, and $deg(v_n)$ denotes the degree of $v_n$.
	The number of edges is thus bounded by
	\begin{equation}
	\frac{N}{2}\delta\leq{}\vert{}\mathbb{E}_0\vert\leq{}\frac{N}{2}\Delta.\label{eq:Size2}
	\end{equation}
	The inequalities~\eqref{eq:SizeBound} in Theorem~\ref{thm:OverallBounds} are finally obtained by combing Lemma~\ref{thm:CountingLemma} to \eqref{eq:Size1} and \eqref{eq:Size2}.
\end{proof}

\subsection{Proof of Lemma~\ref{thm:metric}}
\begin{proof}
	First, we prove that $(\mathbb{G}_0,d_{G_0})$ defines a semi-metric space.
	To do this, we should prove the function $d_{G_0}$ defined by~\eqref{eq:metric} satisfying the following claims:
	\begin{itemize}
		\item[(a)] $d_{G_0}(G_1,G_2)>0$ if $G_1\neq{}G_2$; $d_{G_0}(G, G)=0$, otherwise;
		\item[(b)] $d_{G_0}(G_1,G_2)=d_{G_0}(G_2,G_1)$;
	\end{itemize}
	for $G_1,G_2,G\in{}\mathbb{G}_0$.
	We first see that the three claims are naturally true for a trivial $\mathbb{G}_0$.
	Then in the following we only consider the case of non-trivial $\mathbb{G}_0$.
	To prove the claim (a), we suppose $G_1\neq{}G_2$ with $G_1=g_1\cdot{}G_0$, $G_2=g_2\cdot{}G_0$.
	It thus give $g_1\neq{}g_2$ holds.
	$g_1\cdot{}Aut(G_0)\cap{}g_2\cdot{}Aut(G_0)=\emptyset{}$, since $g_i{}\cdot{}Aut(G_0),\,i=1,2$ are left cosets of $Aut(G_0)$ which partitions $\mathbb{S}_N$.
	We therefore have $p_1\neq{}p_2$ and $d_\mathbb{T}(p_1,p_2)>0$, $\forall{}p_i\in{}g_i\cdot{}Aut(G_0),i=1,2$, by which $d_{G_0}\left(G_1,G_2\right)>0$.
	Suppose conversely that $G_1=G_2$, then we have $g_1^{-1}g_2\in{}Aut(G_0)$.
	We thus know that there exist $\hat{p}\in{}g_1\cdot{}Aut(G_0)$ such that $\hat{p}=g_1g_1^{-1}g_2=g_2$.
	Therefore, it obeys
	\begin{equation}
	\begin{split}
	d_{G_0}\left(G_1,G_2\right)=\min_{
		p_i\in{}g_i\cdot{}Aut(G_0),i=1,2
	}d_{\mathbb{T}_N }(p_1,p_2)\leq{}d_{\mathbb{T}_N }(\hat{p},g_2)=d_{\mathbb{T}_N }(g_2,g_2)=0.
	\end{split}
	\end{equation}
	By $d_{G_0}(G_1,G_2)\geq{}0$, the claim (a) is thus proved.
	
	The claim (b) is obviously true.

	Next, we show the set $\mathbb{N}_N(G)$ defines the neighborhood of $G$ in $\mathbb{G}_0$ associated with $d_{G_0}$.
	We first see that it is trivially true if $D=0$.
	For $D>0$, we prove that $d_{G_0}\left(G',G\right)\leq{}D$ holds for all $G'\in\mathbb{N}_D(G)$.
	By the assumption $G=g\cdot{}G_0$ we first have that  $p\cdot{}G_0=gA\cdot{}G_0=g\cdot{}G_0=G$ holds for all $p\in{}g\cdot{}Aut(G_0)$ where $A\in{}Aut(G_0)$.
	By $G'\in\mathbb{N}_D(G)$, there exists $d_0\in[D]$ such that $G'\in{}\mathbb{I}_{d_0}(G)$.
	Thus
	\begin{equation}
	\begin{split}
	d_{G_0}(G',G)&=\min_{p'\in{}g\cdot{}Aut(G_0)\prod_{i=1}^{d_0}t_i\cdot{}Aut(G_0);\,p\in{}g\cdot{}Aut(G_0)}d_{\mathbb{T}_N}(p',p)\\
	&\leq{}d_{\mathbb{T}_N}\left(gA_1\prod_{i=1}^{d_0}t_iA_2,\,gA_3\right)\\
	&=d_{\mathbb{T}_N}\left(1,\left(gA_1\prod_{i=1}^{d_0}t_iA_2\right)^{-1}gA_3\right)\\
	&=d_{\mathbb{T}_N}\left(1,A_2^{-1}\left(\prod_{i=1}^{d_0}t_i\right)^{-1}A_1^{-1}g^{-1}gA_3\right)=d_{\mathbb{T}_N}\left(1,\prod_{i=1}^{d_0}t_{d_0-i}^{-1}\right)\\
	&=d_{\mathbb{T}_N}\left(1,\prod_{i=1}^{d_0}t_{d_0-i}\right)\leq{}d_0\leq{}D,
	\end{split}\label{eq:dGG}
	\end{equation}
	where $A_i\in{}Aut(G_0),\,i=1,2,3$, $A_1=A_3$ and $A_2$ is equal to the identity of $Aut(G_0)$.
	In~\eqref{eq:dGG}, the fist line follows from the definition of $\mathbb{I}_{d_0}(G)$; the third line holds the left-isometry property of the world metric; the last line holds by the definition of the word metric and the fact $t_i^{-1}=t_i\in{}\mathbb{T}_N$.
	Next, we prove the converse side, that is, $G_x\in{}\mathbb{N}_D(G)$ for all $G_x\in{}\left\{G'\in\mathbb{G}_0\vert{}d_{G_0}(G',G)\leq{}D\right\}$.
	By the definition of $d_{G_0}$,
	\begin{equation}
	\begin{split}
	d_{G_0}(G_x,G)=\min_{p_x\in{}g_x\cdot{}Aut(G_0);p\in{}g\cdot{}Aut(G_0)}d_{\mathbb{T}_N}(p_x,p)\leq{}D
	\end{split}.
	\end{equation}
	Thus, there exist $A_x,A\in{}Aut(G_0)$ such that the inequality
	\begin{equation}
	d_\mathbb{T}(g_xA_x,pA)=d_{\mathbb{T}_N}(1,A_x^{-1}g_x^{-1}gA)\leq{}D
	\end{equation}
	holds.
	Let $g_x=g\cdot{}h_x$ for some $h_x\in\mathbb{S}_N$, then $d_{\mathbb{T}_N}(1,A_x^{-1}h_x^{-1}A)\leq{}D$.
	According to the definition of the word metric $d_{\mathbb{T}_N}$, we know there exists $d\leq{}D$ and a sequence of permutations $\{t_1,t_2,\ldots,t_d\}\subseteq{}\mathbb{T}_N$ such that $A_x^{-1}h_x^{-1}A=\prod_{i=1}^dt_i^{\epsilon}$, where $\epsilon=\{+1,-1\}$~\citep{luck2008survey}.
	Since $t_i^{-1}=t_i$, the equation $h_x=A\prod_{i=1}^dt_i{}A_x^{-1}$ holds, and $G_x=g_xG_0=gh_xG_0=gA\prod_{i=1}^dt_i{}A_x^{-1}G_0=gA\prod_{i=1}^dt_i{}G_0$ consequently holds, where the last equation follows from $A_x^{-1}\in{}Aut(G_0)$.
	The equations say that $G_x$ is an element of $\mathbb{I}_d(G)$, namely, $G_x\in\mathbb{I}_d(G)\subseteq{}N_D(G)$.
	Combing the results from the both two sides, the proof of the lemma is accomplished.
\end{proof}

\subsection{Proof of Theorem~\ref{thm:cover}}
Before the proof of Theorem 3.4, we first restate a classic claim in group theory about the \emph{cycle decomposition} of a permutation under conjugation.
\begin{lemma}\label{thm:conjugate}
	Let $\sigma$ and $\tau$ be two elements of $\mathbb{S}_N$.
	Suppose that $\sigma=(a_1,a_2,\ldots,a_k)(b_1,a_2,\ldots,b_l)\ldots$ is the cycle decomposition of $\sigma$.
	The $\sigma=(\tau(a_1),\tau(a_2),\ldots,\tau(a_k))(\tau(b_1),\tau(a_2),\ldots,\tau(b_l))\ldots$ is the cycle decomposition of $\tau\sigma\tau^{-1}$, the conjugate of $\sigma$ by $\tau$.
\end{lemma}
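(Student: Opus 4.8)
The plan is to prove Lemma~\ref{thm:conjugate}, the statement that conjugation by $\tau$ acts on the cycle decomposition of $\sigma$ by simply relabelling each entry $a$ as $\tau(a)$. This is a purely computational fact about how permutations compose, so the strategy is to verify directly that the proposed relabelled product of cycles coincides with $\tau\sigma\tau^{-1}$ as a function on $[N]$. Two permutations are equal precisely when they agree on every element, so it suffices to evaluate both sides at an arbitrary $j\in[N]$.

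First I would set up notation: write $\sigma$ in its cycle decomposition and let $j\in[N]$ be arbitrary. I would compute $(\tau\sigma\tau^{-1})(j)$ by tracing the element through the three maps from right to left. Put $i=\tau^{-1}(j)$, so that $j=\tau(i)$. Then $(\tau\sigma\tau^{-1})(j)=\tau(\sigma(i))$, which says that $\tau\sigma\tau^{-1}$ sends $\tau(i)$ to $\tau(\sigma(i))$ for every $i$. This single identity is the heart of the lemma: whatever $\sigma$ does to $i$, the conjugate does the corresponding thing to $\tau(i)$.

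Next I would translate this identity into the language of cycles. Recall that $\sigma=(a_1,a_2,\dots,a_k)\cdots$ means $\sigma(a_m)=a_{m+1}$ for $m<k$ and $\sigma(a_k)=a_1$, and analogously on each cycle. Applying the boxed identity with $i=a_m$ gives $(\tau\sigma\tau^{-1})(\tau(a_m))=\tau(\sigma(a_m))=\tau(a_{m+1})$, with indices read cyclically, and likewise $(\tau\sigma\tau^{-1})(\tau(b_m))=\tau(b_{m+1})$ on the second cycle, and so on for every cycle of $\sigma$. These relations say exactly that $\tau\sigma\tau^{-1}$ cyclically permutes the relabelled tuples $(\tau(a_1),\dots,\tau(a_k))$, $(\tau(b_1),\dots,\tau(b_l))$, etc., which is the claimed cycle decomposition. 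Since $\tau$ is a bijection, the relabelled entries are distinct and the relabelled cycles are disjoint and exhaust $[N]$, so this really is a valid cycle decomposition and not merely a product of overlapping cycles.

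There is no serious obstacle here; the only point requiring a little care is the bookkeeping to confirm that applying a bijection to the entries of a disjoint-cycle decomposition again yields a \emph{disjoint}-cycle decomposition, which follows immediately from injectivity of $\tau$. I would therefore keep the argument short, presenting the evaluation $(\tau\sigma\tau^{-1})(\tau(i))=\tau(\sigma(i))$ as the key step and then reading off the cycle structure from it, noting that the fixed points of $\sigma$ (singleton cycles) are handled by the same identity.
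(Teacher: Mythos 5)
Your proof is correct and is the standard textbook argument: the identity $(\tau\sigma\tau^{-1})(\tau(i))=\tau(\sigma(i))$ immediately yields the relabelled cycles, and injectivity of $\tau$ guarantees they remain disjoint. The paper itself states this lemma without proof, citing it as a classical fact from group theory, so there is no in-paper argument to compare against; your write-up supplies exactly the justification one would expect.
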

We then apply this lemma to the following result:
\begin{lemma}\label{thm:cycle}
	Let $\mathbb{C}_N=\left\{(i,j)\vert{}1\leq{}i<j\leq{}N\right\}$ be the collection of all 2-cycles of $\mathbb{S}_N$.g
	For any $G^\prime\in{}\mathbb{I}_d(G)$ where $G=g\cdot{}G_0\in\mathbb{G}_0$, there exist a series of 2-cycles, i.e., $c_1,c_2,\ldots,c_d\in\mathbb{C}_N$ such that $G^\prime=\prod_{k=1}^dc_kG$.
\end{lemma}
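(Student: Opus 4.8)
The plan is to exploit the single structural difference between the generating set used in the definition of $\mathbb{I}_d(G)$ and the target factorization. The neighborhood~\eqref{eq:nh} builds $G'$ by multiplying $G_0$ on the left by a coset representative $q=gA$ with $A\in Aut(G_0)$, followed by a product of \emph{adjacent} transpositions $t_i\in\mathbb{T}_N$; the claim instead asks for a product of \emph{arbitrary} $2$-cycles applied directly to $G=g\cdot G_0$. The bridge between these two descriptions is conjugation, together with the conjugation-invariance of cycle type recorded in Lemma~\ref{thm:conjugate}.

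First I would fix $G'\in\mathbb{I}_d(G)$ and write it, via~\eqref{eq:nh}, as $G'=gA\prod_{i=1}^d t_i\cdot G_0$ for some $A\in Aut(G_0)$ and adjacent transpositions $t_1,\ldots,t_d$. Since each $t_i$ is in particular a $2$-cycle, I would conjugate the whole product by the single element $gA$, using that conjugation distributes over products:
\[
gA\prod_{i=1}^d t_i=\left[(gA)\left(\prod_{i=1}^d t_i\right)(gA)^{-1}\right]gA=\left(\prod_{i=1}^d (gA)\,t_i\,(gA)^{-1}\right)gA.
\]
Setting $c_i:=(gA)\,t_i\,(gA)^{-1}$, Lemma~\ref{thm:conjugate} guarantees that each $c_i$ is again a $2$-cycle, hence $c_i\in\mathbb{C}_N$: conjugation preserves cycle type and sends the transposition $t_i=(a,b)$ to $\bigl((gA)(a),(gA)(b)\bigr)$.

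It then remains to dispose of the residual factor $gA$ acting on $G_0$. Here I would invoke the defining property of automorphisms, $A\cdot G_0=G_0$, to collapse $gA\cdot G_0=g\cdot G_0=G$. Combining the two observations yields
\[
G'=\left(\prod_{i=1}^d c_i\right)gA\cdot G_0=\left(\prod_{i=1}^d c_i\right)\cdot G,
\]
which is exactly the required factorization with $c_1,\ldots,c_d\in\mathbb{C}_N$.

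The only genuinely delicate point is the bookkeeping in the conjugation step: one must conjugate the \emph{entire} product at once so that the internal cancellations $(gA)^{-1}(gA)$ telescope cleanly, and then verify that the surviving copy of $gA$ lands to the right of all the new $2$-cycles, precisely where it can be absorbed by $G_0$. Everything else is a direct appeal to Lemma~\ref{thm:conjugate} and the definition of $Aut(G_0)$, so I do not anticipate any further obstacle.
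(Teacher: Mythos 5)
Your proof is correct and follows essentially the same route as the paper's: both arguments set $c_i=(gA)\,t_i\,(gA)^{-1}$, invoke Lemma~\ref{thm:conjugate} to conclude each $c_i$ is a $2$-cycle, and absorb the residual $gA$ via $gA\cdot G_0=G$. The only difference is presentational --- you conjugate the entire product at once and let the $(gA)^{-1}(gA)$ factors telescope, whereas the paper peels off one transposition at a time starting from $d=1,2$ and extends to general $d$; your version is arguably the cleaner write-up of the same idea.
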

\begin{proof}
	We first consider the case $d=1$.
	By $G^\prime\in{}\mathbb{I}_d(G)$, there exist $A^\prime{}\in{}Aut(G_0)$ and $t\in\mathbb{T}_N$ such that  $G^\prime=gA^\prime{}t{}G_0$.
	Let $c\in\mathbb{S}_N$ be a permutation satisfying $G^\prime=c\cdot{}G$, then we know $G^\prime=c\cdot{}G=c_1gA\cdot{}G_0$ for any $A\in{}Aut(G_0)$.
	Combining the above equations,
	\begin{equation}
	c=gA^\prime{}t(gA^\prime)^{-1}\label{eq:conjugate}
	\end{equation}
	by $A=A^\prime$, implying that $c$ is the conjugate of $t$ by $gA^\prime$.
	Applying Lemma~\ref{thm:conjugate} to~\eqref{eq:conjugate}, we have that $c\in\mathbb{C}_N$ is true.
	Next, we extent it to the case $d=2$.
	In this we have the equation $G^\prime=gA^\prime{}t_1t_2G_0$ holding for $t_1,t_2\in\mathbb{T}_N$.
	We further assume that $c_1,c_2\in\mathbb{S}_N$ and $c_1=(gA^\prime)t_1(gA^\prime)^{-1}$, such that $G^\prime=c_1c_2G$.
	So we can have the following equations:
	\begin{equation}
	G^\prime=gA^\prime{}t_1t_2\cdot{}G_0=c_1gA^\prime{}t_2\cdot{}G_0=c_1gA^\prime{}t_2(gA^\prime{})^{-1}gA^\prime{}G_0=c_1gA^\prime{}t_2(gA^\prime{})^{-1}G,
	\end{equation}
	where the last equation holds for $G=gA^\prime{}G_0$.
	We thus have $c_2=gA^\prime{}t_2(gA^\prime{})^{-1}$, namely, the conjugate of $t_2$ by $gA^\prime{}$.
	Applying Lemma~\ref{thm:conjugate} to these equations, we know that $c_1,c_2\in\mathbb{C}_N$.
	Lemma~\ref{thm:cycle} is proved by extending the above procedure to the cases $d>2$.
\end{proof}
Last, to prove Theorem~\ref{thm:cover}, we see that swapping two vertices using Alg.~\ref{alg:sampling} is equivalent to acting a 2-cycle from $\mathbb{C}_N$ on the vertices of the graph.
Since for all $i_k,j_k\in{}[N],\,i_k\neq{}j_k\,k\in{}[d]$  can be sampled with a positive probability, it deduces that any two-cycles $c_k=(i_k,j_k)\in{}\mathbb{C}_N$ can be drawn with a positive probability using Alg.~\ref{alg:sampling}, covering $\mathbb{I}_d(G)$ according to Lemma~\ref{thm:cycle}.\hfill\qedsymbol

\section{Experiment details and additional results}

\subsection{TNGA+: an Extension of TNGA for TN-PS}
In this subsection, we briefly explain how TNGA+, an extension of TNGA~\cite{li2020evolutionary} for TN-PS, encodes the vertex permutations into chromosomes, which are used to seek for the optimal TN structures by genetic operators.

Figure.~\ref{fig:algorithm} depicts the encoding process.
We encode the structures for TN-PS from two ingredients, the TN-ranks and the permutations, respectively.
For the former, by $\mathbb{F}_{G_0}=\mathbb{Z}^{+,\vert\mathbb{E}_0\vert}$, the ranks can be directly encoded into a string of dimension $\vert\mathbb{E}_0\vert$ with their coordinates in $\mathbb{Z}^{+,\vert\mathbb{E}_0\vert}$. 

For the latter, we randomly embed a permeation into the space $[0,1]^{\vert{}V\vert}$, a set of decimal number vectors, by a \emph{random-key} trick~\cite{bean1994genetic}, which is popularly used to solve the optimal sequencing tasks.
More precisely, the random-key representation encode a permutation with a vector of random numbers from $[0,1]$, and the order of these random numbers reflects the permutation.
For instance, the code $(0.46,\,0.91,0.33)$ would represent the permutation $2\rightarrow{}3\rightarrow{}1$.
Finally,the encoded strings are simply the concatenation of the two ingredients.

\begin{figure*}
	\centering
	\includegraphics[width=140mm, height=40mm]{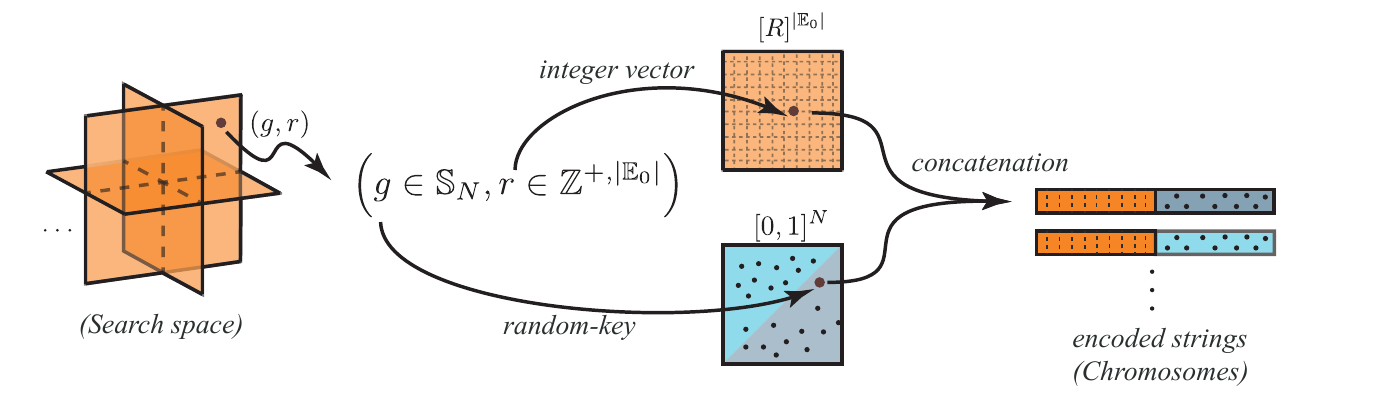}
	\caption{Illustration of how the TN structures for TN-PS are encoded by TNGA+.}
	\label{fig:algorithm}
\end{figure*}
\subsection{Synthetic Data in TT/TR Format}
\noindent\textbf{Configuration of TNGA$^\star$, TNGA+}. 
Both of these two algorithms are based on the GA framework~\cite{li2020evolutionary}. Throughout the TT/TR format synthetic data experiments, they share the same parameters listed as follows. The maximum
number of generations is set to be 30. The population in each generation is set to be 150 under all settings. During each generation in GA, the elimination rate is $36\%$. The reproduction trick in \cite{snyder2006random} is adopted and we set the reproduction number to be 2. Meanwhile, for the selection probability of the recombination operation, we set hyper-parameters $\alpha = 20$ and $\beta = 1$. Moreover, there is a chance of $24\%$ for each gene to mutate after the recombination finished. We initialize the vertices~(core tensors) with each element \emph{i.i.d.} sampled from Gaussian distribution $N(0, 0.1)$. We set the learning rate of the Adam optimizer \citep{kingma2014adam} to 0.001. The decomposition for each individual is repeated 4 times.

\noindent\textbf{Experiment setup of Figure~\ref{fig:convergence} in the manuscript.} In this experiment, the order-8 tensor is selected from the TR structure search experiment. For the order-12 data, we uniformly choose TR-ranks from $\{1,2,3,4\}$ and set the dimension of all tensor modes to 3. The values of vertices are drawn from Gaussian distribution $N(0, 1)$.
After contracting all vertices, we finally uniformly permute the tensor modes in random.
For TNGA+ and TNLS, all the parameters are set the same as in the TR structure search experiment, except that the population of TNGA+ and the sample number of TNLS are set to be 60 or 100.

\begin{figure}
	\centering
	\includegraphics[width=0.9\columnwidth]{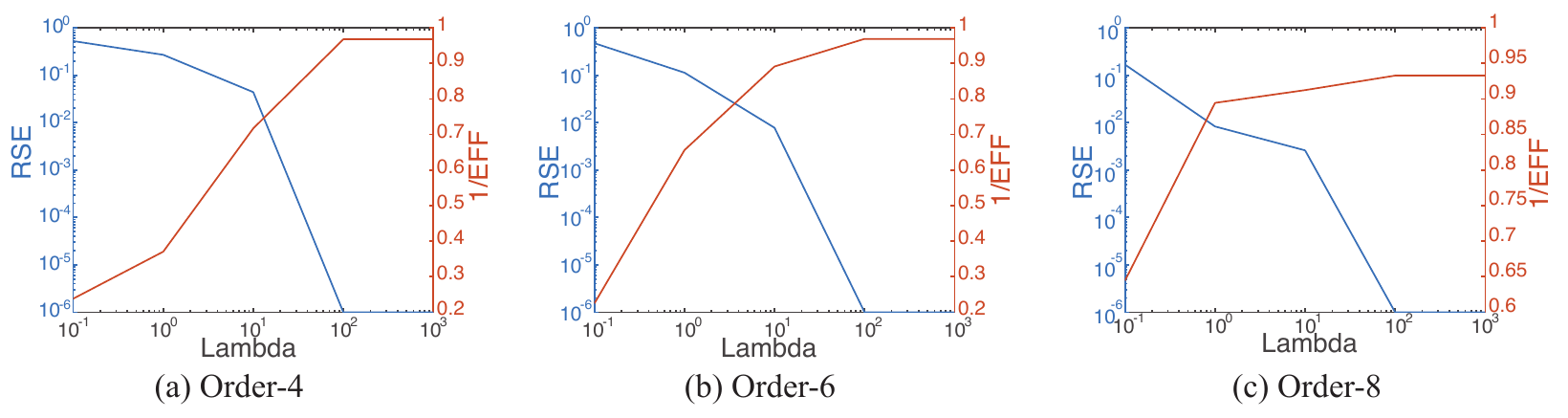}
	\vspace{-0.3cm}
	\caption{RSE and \emph{Eff.} values by TNLS with the varying of the tuning parameter $\lambda$, averaged over the synthetic TT/TR data.
	}\vskip -0.1in
	\label{fig:lambda}
\end{figure}

\noindent\textbf{Trade off between model complexity and approximation accuracy.}
In the experiment, the tuning parameter $\lambda$ given in~\eqref{eq:PTNPS} balances the influence of model complexity and approximation accuracy in the searching process.
Figure~\ref{fig:lambda} shows how RSE and \emph{Eff.} values change with the varying of $\lambda$.
In more details, we choose the values of $\lambda$ from $\{0.1, 1, 10, 100, 1000\}$ and calculate the RSE and \emph{Eff.} averaged over the data used in the synthetic TT/TR data experiments of the order $\{4,6,8\}$, respectively.
Other experiment configuration remains the same as used in the experiment.
We can see from Figure~\ref{fig:lambda}  that the \emph{Eff.} values are larger than $1$ consistently with a wide range of $\lambda$ in all the three orders.
It implies that the TNLS method is relatively stable with the varying of the parameter $\lambda$ \emph{if the tensor is generated with TN models}.
The result is expected since in this case the RSE will decrease dramatically once a good TN structure is found, so the value $\lambda\cdot{}RSE$, the second term of the objective function in~\eqref{eq:PTNPS}, is neglected compared with the first term corresponding to the model complexity.
However, note that the stability would be not held if the tensor is not in low-rank TN formats such as those tensorized natural images.

\subsection{Synthetic Data in Other TN Format}
\noindent\textbf{Data Generation.}
For the synthetic data generation of TTree (order-7) \citep{Ye2019Tensor}, PEPS (order-6) \citep{verstraete2004renormalization}, hierarchical Tucker (HT, order-6) \citep{hackbusch2009new} and multi-scale entanglement renormalization ansatz (MERA, order-8) \citep{cincio2008multiscale,reyes2020multi} which the structures are demonstrated in Figure \ref{fig:TN_strucure}, we first set the dimensions of each tensor mode to 3 and uniformly randomly generate the TN-ranks from $\{1, 2, 3, 4\}$.
Then, each element of the cores is generated \emph{i.i.d.} from Gaussian distribution $N(0,0.1)$. After contracting all vertices, we finally uniformly permute the tensor modes in random. 

\noindent\textbf{Configuration of the comparing methods.}
For TNGA+ and TNLS, the parameters are set as same as the TR structure search experiment, except that for TNGA+ the population in each generation is increased to be 120. Moreover, the coding schemes for HT and MERA are different from TTree and PEPS, which only contain external cores (vertices of color blue). Specifically, for HT and MERA, we fix the permutation of the internal cores~ (vertices of color orange), and therefore only encode the permutation of the external cores. The experimental results including the evaluation numbers of TNGA+ and TNLS are shown in Table  \ref{tab:TN}.
\begin{figure*}[t]
	\centering
	% row 1
	\begin{minipage}[t]{0.12\linewidth}
		\includegraphics[width=1\textwidth]{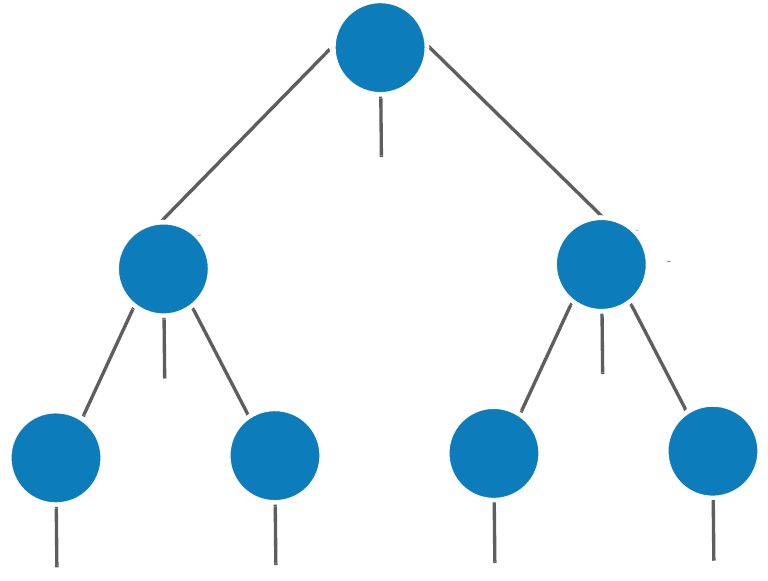}
		\centering{TTree 
		}
	\end{minipage}\hspace{15pt}
	\begin{minipage}[t]{0.12\linewidth}
		\includegraphics[width=1\textwidth]{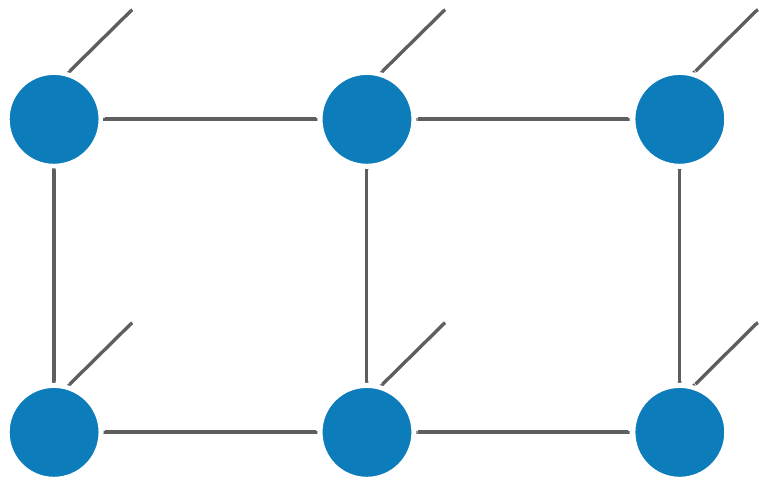}
		\centering{PEPS
		}
	\end{minipage}\hspace{15pt}
	\begin{minipage}[t]{0.18\linewidth}
		\includegraphics[width=1\textwidth]{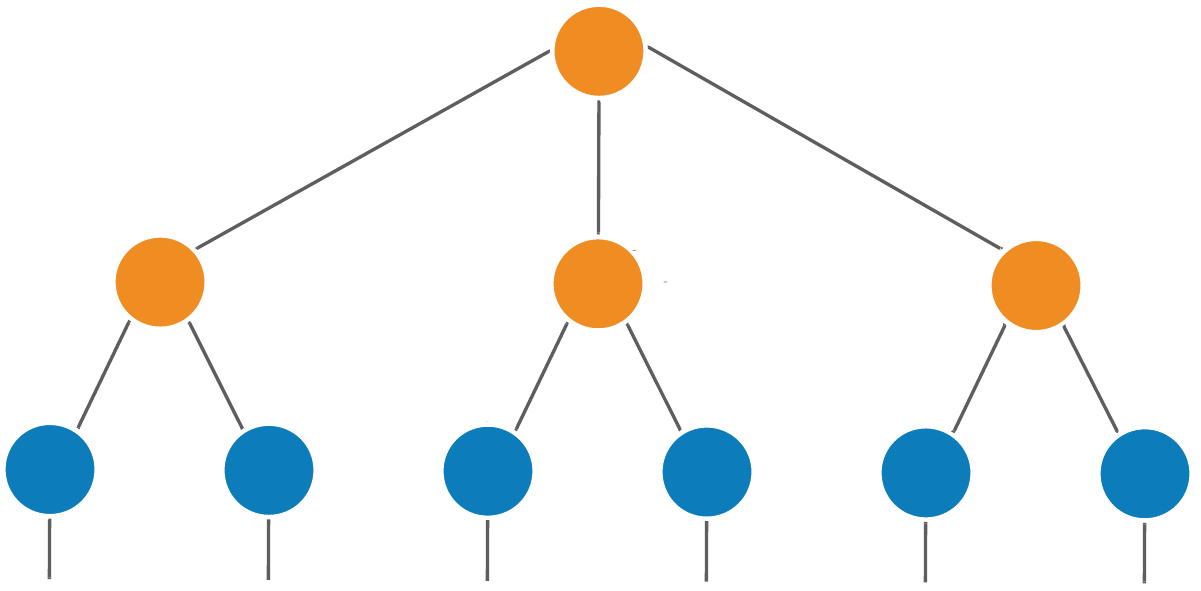}
		\centering{HT
		}
	\end{minipage}\hspace{15pt}
	\begin{minipage}[t]{0.25\linewidth}
		\includegraphics[width=1\textwidth]{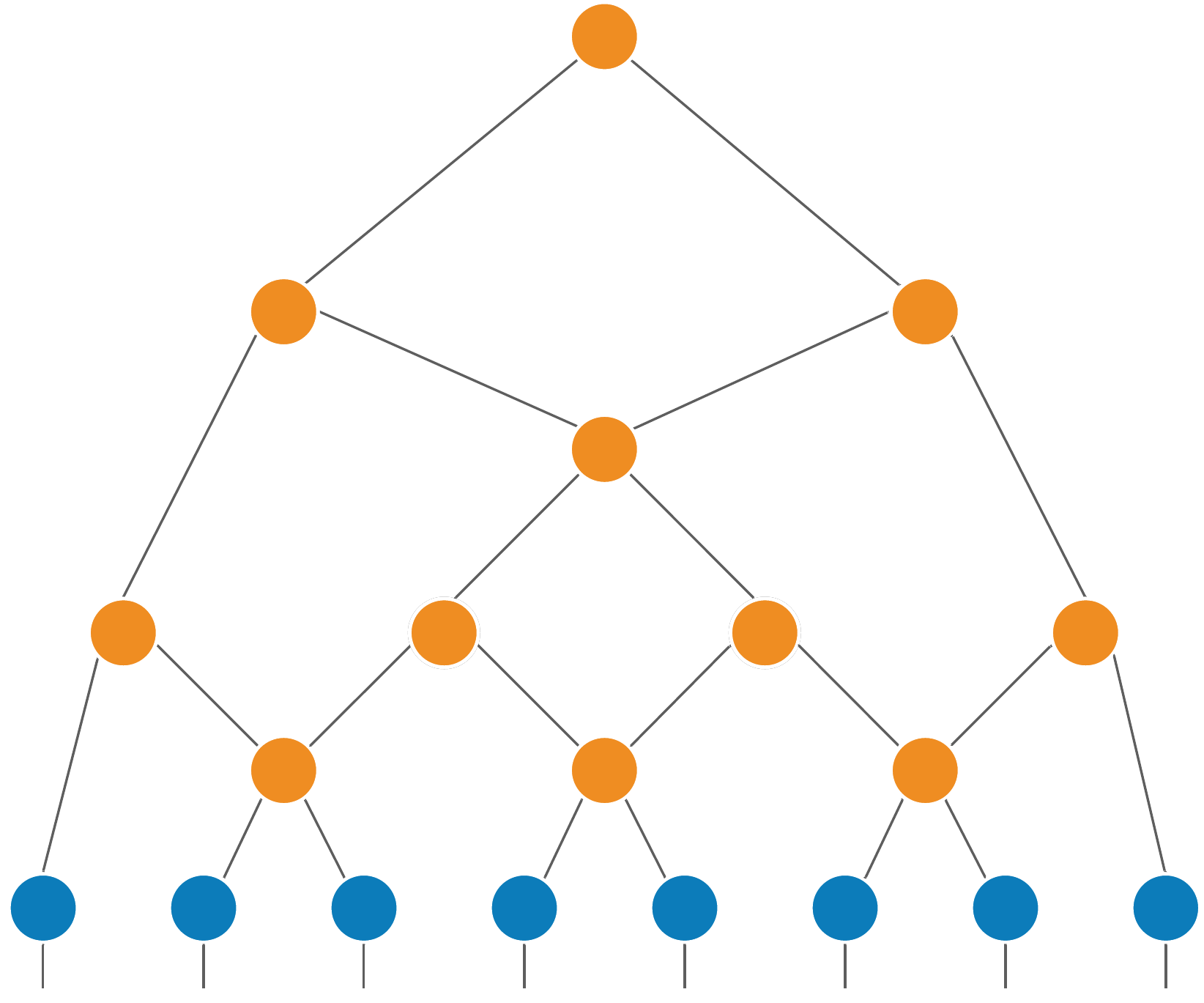}
		\centering{MERA
		}
	\end{minipage}\\
	
	\caption{Illustration of the TN structures applied in the synthetic experiment. The blue nodes with an outer indices indicate the external cores and the orange nodes indicate the internal cores.
	}
	\label{fig:TN_strucure}
\end{figure*}

\subsection{Real-World Data}
\noindent\textbf{Image completion.}
In this experiment, we consider uniformly random missing with the missing
rates $70\%$ and $90\%$. In specific, we firstly use Matlab command ``randperm'' to generate random integer sequence with length that equals to the number of image elements. Then, according to the missing rate, we select a subset of this sequence to generate a binary mask tensor with the same size as the image. Finally, using this mask, we can generate the missing image.
For recovery performance evaluation, we use the RSE of predicted values on the missing entries. 

\begin{figure*}[t]
	\centering
	% row 1
	\begin{minipage}[t]{0.15\linewidth}
		\includegraphics[width=1\textwidth]{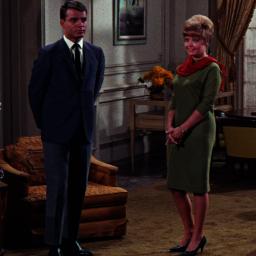}
		\includegraphics[width=1\textwidth]{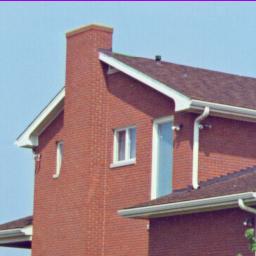}
	\end{minipage}
	\begin{minipage}[t]{0.15\linewidth}
		\includegraphics[width=1\textwidth]{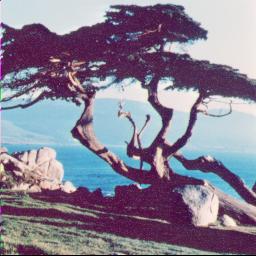}
		\includegraphics[width=1\textwidth]{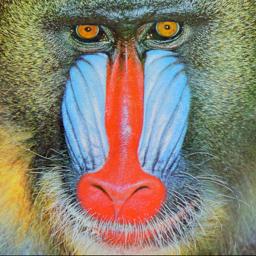}
	\end{minipage}
	\begin{minipage}[t]{0.15\linewidth}
		\includegraphics[width=1\textwidth]{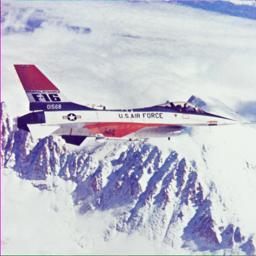}
		\includegraphics[width=1\textwidth]{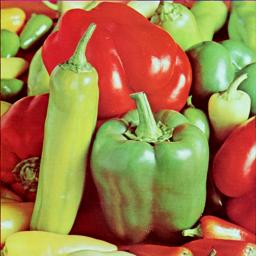}
	\end{minipage}
	\begin{minipage}[t]{0.15\linewidth}
		\includegraphics[width=1\textwidth]{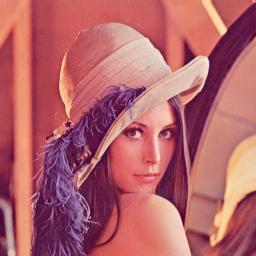}
	\end{minipage}\\
	
	\caption{Illustration of the employed images in image completion experiment. 
	}
	\label{fig:completion}
\end{figure*}

For the proposed TNLS, we set the the maximum iteration $\#Iter$ = 30, and tuning parameters $c_{1}$ = 0.95, $c_{2}$ = 0.9, and the number of sampling $\#Sample$ = 150. Moreover, the rank bound, the learning rate of Adam, and the variance of the Gaussian distribution for core tensors initialization are set to 14, 0.001, 0.1 respectively. For the trade-off parameter $\lambda$, we set it as $0.0008, 0.0007$ for missing rate $0.7, 0.9$.
For TNGA+, the maximum number of the generations is set to be 30. The population in each generation are set to be 300. The elimination rate is  $10\%$ and the reproduction number is set to be 1. Moreover, we set $\alpha = 20$ and $\beta = 1$. The chance for each gene to mutate after the recombination is $24\%$. Other settings, including $\lambda$, Gaussian distribution for initialization, the rank bound and the learning rate, are the same with TNLS.

\noindent\textbf{Image compression.}
In the experiment, we randomly select 10 natural images from the BSD500 \cite{arbelaez2010contour}\footnote{\url{https://www2.eecs.berkeley.edu/Research/Projects/CS/vision/bsds/BSDS300/html/dataset/images.html}}. The images used in this experiment are shown in Figure \ref{fig:live}. We use the Matlab commands ``resize'' and ``rgb2gray'' to turn them into grayscaled images of size $256\times256$, and then rescale them to $\lbrack0,1\rbrack$. Moreover, in this section, we tensorized these images into order-8 tensors by two different ways: a directly reshaping operation denoted by “Reshape” and visual data
tensorization \citep{latorre2005image,bengua2017efficient,yuan2019high}, a image-resolution-based tensorization method, denoted by ``VDT''. 
\begin{figure*}[t]
	\centering
	% row 1
	\begin{minipage}[t]{0.13\linewidth}
		\includegraphics[width=1\textwidth]{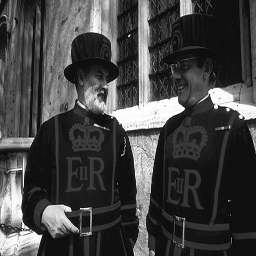}
	\end{minipage}
	\begin{minipage}[t]{0.13\linewidth}
		\includegraphics[width=1\textwidth]{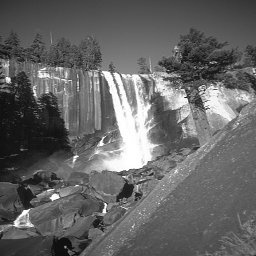}
	\end{minipage}
	\begin{minipage}[t]{0.13\linewidth}
		\includegraphics[width=1\textwidth]{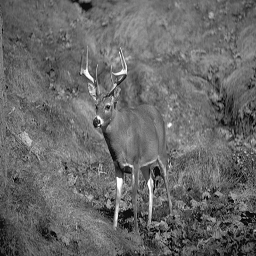}
	\end{minipage}
	\begin{minipage}[t]{0.13\linewidth}
		\includegraphics[width=1\textwidth]{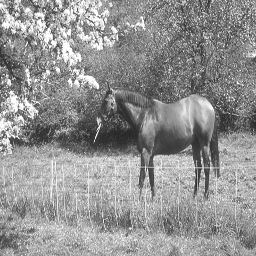}
	\end{minipage}
	\begin{minipage}[t]{0.13\linewidth}
		\includegraphics[width=1\textwidth]{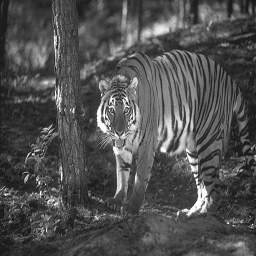}
	\end{minipage}
	\begin{minipage}[t]{0.13\linewidth}
		\includegraphics[width=1\textwidth]{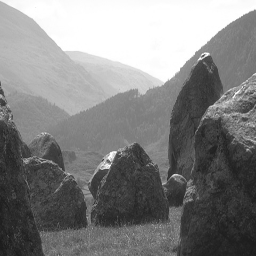}
	\end{minipage}
	\begin{minipage}[t]{0.13\linewidth}
		\includegraphics[width=1\textwidth]{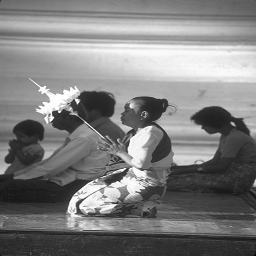}
	\end{minipage}
	\begin{minipage}[t]{0.13\linewidth}
		\includegraphics[width=1\textwidth]{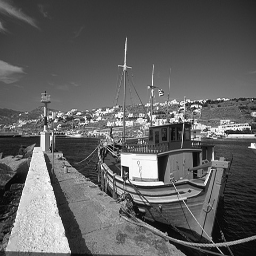}
	\end{minipage}
	\begin{minipage}[t]{0.13\linewidth}
		\includegraphics[width=1\textwidth]{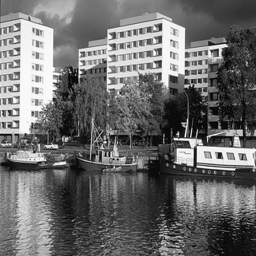}
	\end{minipage}
	\begin{minipage}[t]{0.13\linewidth}
		\includegraphics[width=1\textwidth]{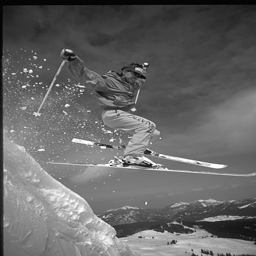}
	\end{minipage}\\
	
	\caption{Illustration of the employed images in image compression experiment. 
	}
	\label{fig:live}
\end{figure*}

For the proposed TNLS, we set the the maximum iteration $\#Iter$ = 20, and tuning parameters $c_{1}$ = 0.95, $c_{2}$ = 0.9, and the number of sampling $\#Sample$ = 150. Moreover, the rank bound, the learning rate of Adam, and the variance of the Gaussian distribution for core tensors initialization are set to be $14, 0.01, 0.1$, respectively. For the trade-off parameter $\lambda$, we set it as 5. For TNGA+, the maximum number of the generations is set to be 30. The population in each generation are set to be 300. The the elimination rate is  $10\%$ and the reproduction number is set to 1. Moreover, we set $\alpha = 25$ and $\beta = 1$. The chance for each gene to mutate after the recombination is $30\%$. Other settings, including $\lambda$, Gaussian distribution for initialization, the rank bound and the learning rate, are the same with TNLS.

\noindent\textbf{Compressing TGP models.}
In this task, we choose three univariate regression datasets from the UCI and LIBSVM archives. The Combined Cycle Power Plant (CCPP)\footnote{\url{https://archive.ics.uci.edu/ml/datasets/Combined+Cycle+Power+Plant}} dataset consists of 9569 data points collected from a power plant with 4 features and a single response. The MG\footnote{\url{https://www.csie.ntu.edu.tw/~cjlin/libsvmtools/datasets/regression.html\#mg}} data have 1385 data points with 6 features and a single response.  The Protein\footnote{\url{https://archive.ics.uci.edu/ml/datasets/Physicochemical+Properties+of+Protein+Tertiary+Structure}} data contain 45730 instances with 9 attributes and a single response. For all the datasets, we randomly choose 80\% of the data for training and the rest for testing, then standardize the training and testing sets respectively by removing the mean and scaling to unit variance, which is the same with settings in TTGP \citep{izmailov2018scalable}.

In this experiment, we aim to demonstrate that our method is capable of searching more efficient structures in this learning task. This is different from the above tasks since we search for TN structures of model parameters, instead of compressing data. Specifically, tensor train Gaussian process (TTGP) tensorizes the variational mean vector in GP to a tensor whose order equals to the number of input features, and the dimension of each order is the number of inducing points on the corresponding feature. In our settings, for CCPP, we choose 12 inducing points on each feature and result in an order-4 tensor of shape $12 \times 12 \times 12 \times 12$. For MG, we choose 8 inducing points and get an order-6 tensor of shape $8 \times 8 \times 8 \times 8 \times 8 \times 8$. For Protein, we choose 4 inducing points and obtain an order-9 tensor of shape $4 \times 4 \times 4 \times 4 \times 4 \times 4 \times 4 \times 4 \times 4$. TTGP uses TT to approximate these tensors. However, the original TTGP are restriced to TT representation and the TT-ranks are treated as pre-defined hyper-parameters. For all the datasets, we set TT-ranks as 10.

To learn more compact representations, we apply the structure searching to TTGP. In particular, we firstly train a TTGP with given TT-ranks and get the TT representation of the variational mean. Then we use our method to search for alternative TR structures of the TT variational mean. Finally, we plug the reparameterized variational mean back into the original TTGP model for inference. In summary, we follow the settings of TTGP except that we reparameterize the TT tensor. 

For the proposed TNLS, we set the the maximum iteration $\#Iter$ = 20, and tuning parameters $c_{1}$ = 0.9, $c_{2}$ = 0.9, and the number of sampling $\#Sample$ = 150, 300, 300 for the TT variational mean of CCPP, MG and Protein regression task. Moreover, the rank bound, the learning rate of Adam, and the variance of the Gaussian distribution for core tensors initialization are set to be $14, 0.001, 0.01$, respectively. For the trade-off parameter $\lambda$, we set it as $\lambda=1\times10^{5},1\times10^{7},1\times10^{3}$ for CCPP, MG and Protein, respectively. For TNGA+, the maximum number of the generations is set to be 30. The population in each generation are set to be 150, 190, 300 for the TT variational mean of CCPP, MG and Protein regression task. The elimination rate is  $30\%$ and the reproduction number is set to 1. Moreover, we set $\alpha = 20$ and $\beta = 1$. The chance for each gene to mutate after the recombination is $30\%$. Other settings, including $\lambda$, Gaussian distribution for initialization, the rank bound and the learning rate, are the same with TNLS.

% You can have as much text here as you want. The main body must be at most $8$ pages long.
% For the final version, one more page can be added.
% If you want, you can use an appendix like this one, even using the one-column format.
% %%%%%%%%%%%%%%%%%%%%%%%%%%%%%%%%%%%%%%%%%%%%%%%%%%%%%%%%%%%%%%%%%%%%%%%%%%%%%%%
% %%%%%%%%%%%%%%%%%%%%%%%%%%%%%%%%%%%%%%%%%%%%%%%%%%%%%%%%%%%%%%%%%%%%%%%%%%%%%%%

\end{document}